\documentclass[letterpaper]{article} 
\usepackage{ijcai23}  
\usepackage{times}  
\usepackage{helvet}  
\usepackage{courier}  
\usepackage[hyphens]{url}  
\usepackage{graphicx} 
\urlstyle{rm} 
\usepackage{natbib}  
\usepackage{caption} 
\frenchspacing  
\setlength{\pdfpagewidth}{8.5in} 
\setlength{\pdfpageheight}{11in} 
\usepackage[switch]{lineno}


%
\pdfinfo{
/TemplateVersion (IJCAI.2023.0)
}

\setcounter{secnumdepth}{2} 

\usepackage[ruled,linesnumbered]{algorithm2e}
\SetKwInput{KwInput}{Input}
\usepackage{amsmath}
\usepackage{amssymb}
\usepackage{amsthm}
\usepackage{booktabs}
\usepackage{calc}
\usepackage{changepage}
\usepackage{multirow}
\usepackage{pifont}
\usepackage{subcaption}
\usepackage{xcolor}

\usepackage{macros}

\title{NeuPSL: Neural Probabilistic Soft Logic}

\author {
    Connor Pryor$^{*1}$ \and
    Charles Dickens$^{*1}$ \and
    Eriq Augustine $^1$ \and
    Alon Albalak $^2$ \and \\
    William Wang $^2$ \And
    Lise Getoor $^1$\\
    \affiliations
    $^1$ UC Santa Cruz \\
    $^2$ UC Santa Barbara\\
    \emails
    cfpryor@ucsc.edu, cadicken@ucsc.edu, eaugusti@ucsc.edu, alon\_albalak@ucsb.edu, william@cs.ucsb.edu, getoor@ucsc.edu
}

\begin{document}
    \maketitle
    \def\thefootnote{*}\footnotetext{These authors contributed equally to this work}

    \begin{abstract}
        In this paper, we introduce \textit{\longname{}} (\shortname{}), a novel neuro-symbolic (NeSy) framework that unites state-of-the-art symbolic reasoning with the low-level perception of deep neural networks.
To model the boundary between neural and symbolic representations, we propose a family of energy-based models, \textit{NeSy Energy-Based Models}, and show that they are general enough to include NeuPSL and many other NeSy approaches.
Using this framework, we show how to seamlessly integrate neural and symbolic parameter learning and inference in \shortname{}.
Through an extensive empirical evaluation, we demonstrate the benefits of using NeSy methods, achieving upwards of 30\% improvement over independent neural network models.
On a well-established NeSy task, MNIST-Addition, \shortname{} demonstrates its joint reasoning capabilities by outperforming existing NeSy approaches by up to 10\% in low-data settings.
Furthermore, \shortname{} achieves a 5\% boost in performance over state-of-the-art NeSy methods in a canonical citation network task with up to a $40$ times speed up.



    \end{abstract}

    \section{Introduction}

The field of artificial intelligence (AI) has long sought a symbiotic union of neural and symbolic methods.  
Neural-based methods excel at low-level perception and learn from large training data sets but struggle with interpretability and generalizing in low-data settings.
Meanwhile, symbolic methods can effectively use domain knowledge, context, and common sense to reason with limited data but have difficulty representing complex low-level patterns. 
Recently, neuro-symbolic computing (NeSy) \citep{besold:arxiv17, garcez:jal19, deraedt:ijcai20} has emerged as a promising new research area with the goal of developing systems that integrate neural and symbolic methods in a mutually beneficial manner.

A neural and symbolic union has the potential to yield two highly desirable capabilities - the ability to perform structured prediction (\textit{joint inference}) across related examples that possess complex low-level features and the ability to jointly learn (\textit{joint learning}) and adapt parameters over neural and symbolic models simultaneously.
For instance, predicting the result of competitions between teams using historical performance statistics in a tournament bracket requires methods to perform joint inference to reason over low-level trends and avoid inconsistencies such as two first-place finishes.
Unfortunately, joint inference problems quickly grow in complexity as the output space typically increases combinatorially.
For example, in the tournament setting, as the number of entries increases, the number of potential solutions grows exponentially ($O(2^{n})$).
An open challenge in the NeSy community is scaling joint inference and reasoning.

This paper introduces \textit{Neural Probabilistic Soft Logic} (\shortname{}), a novel NeSy method that integrates deep neural networks with a symbolic method designed for fast joint learning and inference.
\shortname{} extends probabilistic soft logic (PSL) \citep{bach:jmlr17}, a state-of-the-art and scalable probabilistic programming framework that can reason statistically (using probabilistic inference) and logically (using soft rules).
PSL has been shown to excel in a wide variety of tasks, including natural language processing \citep{beltagy:acl14, deng:emnlp15, liu:aaai16, rospocher:iswc18}, data mining \citep{alshukaili:iswc16, kimmig:tkde19}, recommender systems \citep{kouki:recsys15}, knowledge graph discovery \citep{pujara:iswc13}, fairness modeling \citep{farnadi:deb19, dickens:fatrec20}, and causal reasoning \citep{sridhar:ijcai18}.
The key innovation of \shortname{} is a new class of predicates that rely on neural network output for their values.
This change fundamentally alters the learning and joint inference problems by requiring efficient integrated symbolic and neural parameter learning.
The appeal of this extension is that it allows for the semantics and implementation of the symbolic language to remain the same as PSL, while also incorporating the added benefit of low-level neural perception.
To gain a deeper understanding of optimizing the symbolic and neural parameters, we propose a versatile mathematical framework, Neuro-Symbolic Energy-Based Models (NeSy-EBMs), that enables many NeSy systems to utilize established Energy-Based Model learning losses and algorithms.
Utilizing this theory and leveraging the unique relaxation properties of PSL, we show that a gradient over these neural predicates can be calculated and passed back to common back-propagation engines such as PyTorch or Tensorflow, allowing for scalable end-to-end gradient training.

Our key contributions include:
1) We define \textit{Neuro-Symbolic Energy-Based Models} (NeSy-EBMs), a family of energy-based models, and show how they provide a foundation for describing, understanding and comparing NeSy systems.
2) We introduce \shortname{}, describe how it fits into the NeSy ecosystem and supports scalable joint inference, and 
show how it can be trained end-to-end using a joint energy-based learning loss.
3) We perform extensive evaluations over two image classification tasks and two citation network datasets.
Our results show \shortname{} consistently outperforms existing approaches on joint inference tasks and can more efficiently leverage structure, particularly in low-data settings.

    \section{Related Work}
\label{sec:related-work}

Neuro-symbolic computing (NeSy) is an active area of research that aims to incorporate logic-based reasoning with neural networks
\citep{garcez:book02,bader:wwst05,garcez:book09,serafini:aiia16,besold:arxiv17,donadello:ijcai17,yang:neurips17,evans:jair18,manhaeve:ai21,garcez:jal19,deraedt:ijcai20,lamb:ijcai20,badreddine:ai22}.
The advantages of NeSy systems include interpretability, robustness, and the ability to integrate various sub-problem solutions (such as perception, reasoning, and decision-making).
For a thorough introduction to NeSy literature, we refer the reader to the excellent surveys by \citenoun{besold:arxiv17} and \citenoun{deraedt:ijcai20}.
In this section, we identify key NeSy research categories and provide a brief description of each.

\textbf{Differentiable frameworks of logical reasoning:}
Methods in this category use neural networks' universal function approximation properties to emulate logical reasoning inside networks.
Examples include: \citenoun{rocktaschel:neurips17}, \citenoun{bovsnjak:icml17}, \citenoun{evans:jair18},
and \citenoun{cohen:jair20}.

\textbf{Constrained Output:}
These approaches enforce constraints or regularizations on the output of neural networks.
Examples include: \citenoun{hu:acl16}, \citenoun{diligenti:icmla17}, \citenoun{donadello:ijcai17}, \citenoun{mehta:emnlp18}, \citenoun{xu:icml18}, and \citenoun{nandwani:neurips19}.

\textbf{Executable logic programs:}
These approaches use neural models to build executable logical programs.
Examples include \citenoun{liang:acl17} and \citenoun{mao:iclr19}.
We highlight \ltnlongname{} (\ltnshortname{}) 
\citep{badreddine:ai22}, as we include this approach in our empirical evaluation.
\ltnshortname{} connect neural predictions into functions representing symbolic relations with real-valued or fuzzy logic semantics.

\textbf{Neural networks as predicates:}
This line of work integrates neural networks and probabilistic reasoning by introducing neural networks as predicates in the logical formulae.
This technique provides a very general and flexible framework for NeSy reasoning and allows for the use of multiple networks as well as the full incorporation of constraints and relational information.
Examples include  DASL \citep{sikka:techreport20}, NeurASP \citep{yang:ijcai20}, Nuts\&Bolts \citep{sachan:neurips18}, \dpllongname{} (\dplshortname{}) \citep{manhaeve:ai21}, and our proposed method (\longname{}).
\dplshortname{} combines general-purpose neural networks with the probabilistic modeling of ProbLog \citep{deraedt:ijcai07} in a way that allows for learning and inference over complex tasks, such as program induction.
We include \dplshortname{} in our empirical evaluation.

    \section{Neuro-Symbolic Energy-Based Models}
\label{sec:nesy-ebms}

With the success and growth of NeSy research, there is an increasing need for a common formalization of NeSy systems to accelerate the research and understanding of the field.
We fill this need with a general mathematical framework, \textit{Neuro-Symbolic Energy-Based Models} (NeSy-EBMs).
NeSy-EBMs encompass previous approaches and establishes the foundation of our approach.
Energy-Based Models (EBMs) \citep{lecun:book06} measure the compatibility of a collection of observed (or input) variables $\mathbf{x} \in \mathcal{X}$ and target (or output) variables $\mathbf{y} \in \mathcal{Y}$ with a scalar-valued \emph{energy function}: $E: \mathcal{Y} \times \mathcal{X} \to \mathbb{R}$.
Low energy states of the variables represent high compatibility.
Prediction or \emph{inference} in EBMs is performed by finding the lowest energy state of the variables $\mathbf{y}$ given $\mathbf{x}$.
Energy functions are parameterized by variables $\mathbf{w} \in \mathcal{W}$, and \emph{learning} is the task of finding a parameter setting that associates low energy to correct solutions.

Building on the well-known EBM framework, NeSy-EBMs are a family of EBMs that integrate neural architectures with explicit encodings of symbolic relations. 
The input variables are organized into neural, $\mathbf{x}_{nn} \in \mathcal{X}_{nn}$, and symbolic, $\mathbf{x}_{sy} \in \mathcal{X}_{sy}$, vectors.
Furthermore, the parameters of the energy function, $\mathbf{w}$, are partitioned into neural weights, $\mathbf{w}_{nn} \in \mathcal{W}_{nn}$, and symbolic weights, $\mathbf{w}_{sy} \in \mathcal{W}_{sy}$.
Formally,
\begin{definition}[NeSy-EBM]
    Let $\mathbf{y} \in \mathcal{Y}$ and $\mathbf{x}_{sy}\in \mathcal{X}_{sy}$ be vectors of variables with symbolic interpretations. 
    Let $\mathbf{g}_{nn}$ be neural networks with \textbf{neural weights} $\mathbf{w}_{nn} \in \mathcal{W}_{nn}$ and inputs $\mathbf{x}_{nn} \in \mathcal{X}_{nn}$. 
    A \textbf{symbolic potential} is a function of $\mathbf{y}$, $\mathbf{x}_{sy}$, and $\mathbf{g}_{nn}(\cdot)$ parameterized by \textbf{symbolic weights} $\mathbf{w}_{sy} \in \mathcal{W}_{sy}$: $\psi(\mathbf{y}, \mathbf{x}_{sy}, \mathbf{w}_{sy}, \mathbf{g}_{nn}(\mathbf{x}_{nn}, \mathbf{w}_{nn})) \in \mathbb{R}$.
    A \textbf{NeSy-EBM energy function} is a mapping of a vector of $m$ symbolic potential outputs, $\Psi(\mathbf{y}, \mathbf{x}_{sy}, \mathbf{w}_{sy}, \mathbf{x}_{nn}, \mathbf{w}_{nn}) = [\psi_{i}(\mathbf{y}, \mathbf{x}_{sy}, \mathbf{w}_{sy}, \mathbf{g}_{nn}(\mathbf{x}_{nn}, \mathbf{w}_{nn}))]_{i = 1}^{m}$, to a real value: 
    $E(\Psi(\mathbf{y}, \mathbf{x}_{sy}, \mathbf{w}_{sy}, \mathbf{x}_{nn}, \mathbf{w}_{nn})) \in \mathbb{R}$.
    \label{def:NeSy-EBM}
\end{definition}
NeSy-EBMs are differentiated from one another by the instantiation process, the form of the symbolic potentials, and the definition of the energy function.
In appendix, we formally show how two NeSy systems \dpllongname{} (\dplshortname{}) \citep{manhaeve:neurips18} and \ltnlongname{} (\ltnshortname{}) \citep{badreddine:ai22} fit into the NeSy-EBM framework.
In summary, \dplshortname{} uses neural network outputs to specify event probabilities that are used in logical formulae defining probabilistic dependencies.
The definition of the \dplshortname{} symbolic potentials and energy function are tied to the inference task; a different definition of the symbolic potential and energy function is used to implement marginal versus MAP inference.
For marginal, the most common \dplshortname{} inference, symbolic potentials are functions of marginal probabilities, and the energy function is a joint distribution that is the sum of the symbolic potentials.
\ltnshortname{} instantiate a model which forwards neural network predictions into functions representing symbolic relations with real-valued or fuzzy logic semantics.
The fuzzy logic functions are symbolic potentials that are aggregated to define the energy function.
The following section will introduce how our approach, \shortname{}, is instantiated as a NeSy-EBM.
Using this common framework, understanding and theoretical advances can be made across NeSy approaches. 


\begin{figure*}[t]
    \centering

    \includegraphics[width=0.80 \textwidth]{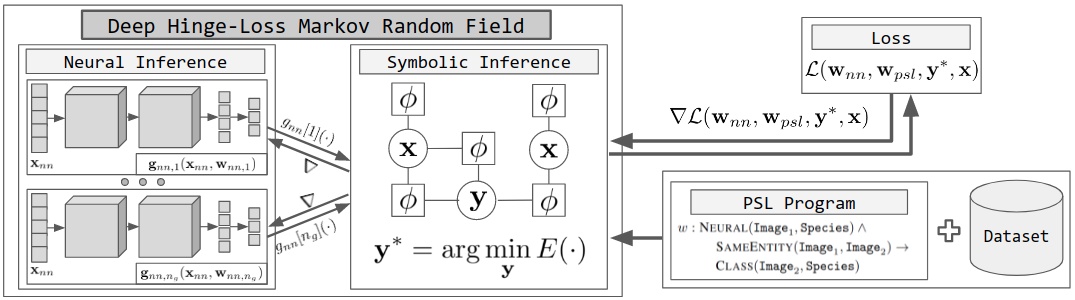}

    \caption{
        \shortname{} inference and learning pipeline.
    }
    \label{fig:neupsl-pipeline}
\end{figure*}

\subsection{Joint Reasoning in  NeSy-EBMs}

We highlight two important categories of NeSy-EBM energy functions: \emph{joint} and \emph{independent}.
Formally, an energy function that is additively separable over the output variables $\mathbf{y}$ is an \textit{independent energy function}, i.e., corresponding to each of the $n_{y}$ components of the output variable $\mathbf{y}$ there exists functions $n_{y}$ functions $E_{1}(\mathbf{y}[1], \mathbf{x}_{sy}, \mathbf{w}_{sy}, \mathbf{g}(\mathbf{x}_{nn}, \mathbf{w}_{nn}))$, $\cdots$, $E_{n_{y}}(\mathbf{y}[n_{y}], \mathbf{x}_{sy}, \mathbf{w}_{sy}, \mathbf{g}(\mathbf{x}_{nn}, \mathbf{w}_{nn}))$ such that 
\begin{align*}
    E(\cdot) = \sum_{i = 1}^{n_{y}} E_{i}(\mathbf{y}[i], \mathbf{x}_{sy}, \mathbf{w}_{sy}, \mathbf{g}(\mathbf{x}_{nn}, \mathbf{w}_{nn})).
\end{align*}
While a function that is not separable over output variables $\mathbf{y}$ is a \textit{joint energy function}.
This categorization allows for an important distinction during inference and learning.
Independent energy functions simplify inference and learning as finding an energy minimizer, $\mathbf{y}^{*}$, can be distributed across the independent functions $E_{i}$.
In other words, the predicted value for a variable $\mathbf{y}[i]$ has no influence over that of $\mathbf{y}[j]$ where $j \neq i$ and can therefore be predicted separately, i.e., independently.
However, independent energy functions cannot leverage some joint information that may be used to improve predictions.
See appendix for further details.

    \section{\longname{}}
\label{sec:neupsl}
Having laid the NeSy-EBM groundwork, we now introduce  \textit{\longname{}} (\shortname{}), a novel NeSy-EBM framework that extends the probabilistic soft logic (PSL) framework \citep{bach:jmlr17}.
At its core, \shortname{} leverages the power of neural networks' low-level perception by seamlessly integrating their outputs with a collection of symbolic potentials generated through a PSL program. 
\figref{fig:neupsl-pipeline} provides a graphical representation of this process.
The symbolic potentials and neural networks together define a \emph{deep hinge-loss Markov random field} (\emph{Deep-HL-MRF}), a tractable probabilistic graphical model that supports scalable convex joint inference.
This section provides a comprehensive description of how \shortname{} instantiates its symbolic potentials and how the symbolic potentials are combined to define an energy function,
while the following section details \shortname{}'s end-to-end neural-symbolic inference, learning, and joint reasoning processes.

\shortname{} instantiates the symbolic potentials of its energy function using the PSL language where dependencies between relations and attributes of entities in a domain, defined as \emph{atoms}, are encoded with weighted first-order logical clauses and linear arithmetic inequalities referred to as \emph{rules}.
To illustrate, consider a setting in which a neural network is used to classify the species of an animal in an image.
Further, suppose there exists external information suggesting when two images may contain the same entity.
The information linking the images may come from various sources,
such as the images' caption or metadata indicating the images were captured by the same device within a short period of time.
\shortname{} represents the neural network's animal classification of an image ($\pslarg{Image}_{1}$) as a species ($\pslarg{Species}$) with the atom $\pslpred{Neural}(\pslarg{Image}_{1}, \pslarg{Species})$ and the probability that two images ($\pslarg{Image}_{1}$ and $\pslarg{Image}_{2}$) contain the same entity with the atom $\pslpred{SameEntity}(\pslarg{Image}_{1}, \pslarg{Image}_{2})$.
Additionally, we represent \shortname{}'s classification of $\pslarg{Image}_{2}$ with $\pslpred{Class}(\pslarg{Image}_{2}, \pslarg{Species})$.
The following weighted logical rule in \shortname{} represents the notion that two images identified as the same entity may also be of the same species:
\begin{align}
    w: \pslpred{Neural}(& \pslarg{Image}_{1}, \pslarg{Species}) \nonumber \\
    & \psland \pslpred{SameEntity}(\pslarg{Image}_{1}, \pslarg{Image}_{2}) \nonumber \\
    & \rightarrow \ \pslpred{Class}(\pslarg{Image}_{2}, \pslarg{Species})
    \label{eq:example_neupsl_rule}
\end{align}
The parameter $ w $ is the weight of the rule, and it quantifies its relative importance in the model.
Note these rules can either be hard or soft constraints.
Atoms and weighted rules are templates for creating symbolic potentials or soft constraints.
To create these symbolic potentials, atoms and rules are instantiated with observed data and neural predictions.
Atoms instantiated with elements from the data are referred to as \emph{ground atoms}.
Then, valid combinations of ground atoms substituted in the rules create \emph{ground rules}.
To illustrate, suppose that there are two images $\{Id1, Id2\}$ and three species classes $\{Cat, Dog, Frog\}$.
Using the above data for cats would result in the following ground rules (analogous ground rules would be created for dogs and frogs):
\begin{align*}
    w: \pslpred{Neural}(Id1, Cat) \psland & \pslpred{SameEntity}(Id1, Id2) \\
    & \rightarrow \ \pslpred{Class}(Id2, Cat) \\
    w: \pslpred{Neural}(Id2, Cat) \psland & \pslpred{SameEntity}(Id2, Id1) \\
    & \rightarrow \ \pslpred{Class}(Id1, Cat)
\end{align*}
Ground atoms are mapped to either an observed variable, $x_{sy,i}$, target variable, $y_{i}$, or a neural function with inputs $\mathbf{x}_{nn}$ and parameters $\mathbf{w}_{nn, i}$: $g_{nn, i}(\mathbf{x}_{nn}, \mathbf{w}_{nn, i})$.
Then, variables are aggregated into the vectors $\mathbf{x}_{sy}=[x_{sy_i}]_{i = 1}^{n_{x}}$ and $\mathbf{y}=[y_{i}]_{i = 1}^{n_y}$ and neural outputs are aggregated into the vector $\mathbf{g}_{nn} = [g_{nn, i}]_{i = 1}^{n_{g}}$.
Ground rules are either logical (e.g., \eqnref{eq:example_neupsl_rule}) or arithmetic defined over $\mathbf{x}_{sy}$, $\mathbf{y}$, and $\mathbf{g}_{nn}$.
These ground rules create one or more potentials $\phi(\cdot) \in \mathcal{R}$, where logical rules are relaxed using Łukasiewicz continuous valued logical semantics \citep{klir:book95}.
\commentout{
Each ground rule creates one or more potentials $\phi_{i}$ defined over $\mathbf{x}_{sy}$, $\mathbf{y}$, and $\mathbf{g}_{nn}$.
Logical rules, such as the one given in example \eqref{eq:example_neupsl_rule}, are relaxed using Łukasiewicz continuous valued logical semantics \citep{klir:book95}.
\shortname{} also supports arithmetic rules for defining penalty functions.
}
Each potential $\phi(\cdot)$ is associated with a weight $w_{psl}$ inherited from its instantiating rule.
The potentials and weights from the instantiation process are used to define a member of a tractable class of graphical models, \emph{deep hinge-loss Markov random fields} (Deep-HL-MRF):

\begin{definition}[Deep Hinge-Loss Markov Random Field]
    Let $\mathbf{y} \in [0, 1]^{n_{y}}$ and $\mathbf{x}_{sy} \in [0, 1]^{n_{x}}$ be vectors of $[0, 1]$ valued variables. 
    Let $\mathbf{g}_{nn} = [g_{nn, i}]_{i = 1}^{n_{g}}$ be functions with corresponding parameters $\mathbf{w}_{nn}=[\mathbf{w}_{nn, i}]_{i = 1}^{n_{g}}$ and inputs $\mathbf{x}_{nn}$.
    A \textbf{deep hinge-loss potential} is a function of the form

    \begin{align}
        \phi(\mathbf{y}, \mathbf{x}_{sy}, \mathbf{x}_{nn}, \mathbf{w}_{nn}) = \max(l(\mathbf{y}, \mathbf{x}_{sy}, \mathbf{g}_{nn}(\mathbf{x}_{nn}, \mathbf{w}_{nn})), 0)^\alpha
        \label{eq:neupsl-potential}
    \end{align}
    where $l(\cdot)$ is a linear function and $\alpha \in \{1, 2\}$.
    Let $\mathcal{T} = [t_i]_{i = 1}^{r}$ denote an ordered partition of a set of $m$ deep hinge-loss potentials: $\{\phi_{1}, \cdots, \phi_{m}\}$. 
    For each partition $t_{i}$ define $\Phi_i(\mathbf{y} , \mathbf{x}_{sy}, \mathbf{x}_{nn}, \mathbf{w}_{nn}) := \sum_{j \in t_i} \phi_{i}(\mathbf{y}, \mathbf{x}_{sy}, \mathbf{x}_{nn}, \mathbf{w}_{nn})$ and let $\mathbf{\Phi(\mathbf{y} , \mathbf{x}_{sy}, \mathbf{x}_{nn}, \mathbf{w}_{nn})} := [\Phi_i(\mathbf{y} , \mathbf{x}_{sy}, \mathbf{x}_{nn}, \mathbf{w}_{nn})]_{i = 1}^{r}$.
    Further, let $\mathbf{w}_{psl} = [w_{psl,i}]_{i = 1}^{r}$ be a vector of non-negative weights corresponding to the partition $\mathcal{T}$.
    Then, a \textbf{deep hinge-loss energy function} is
    \begin{align}
    E(\mathbf{y}, \mathbf{x}_{sy}, \mathbf{x}_{nn}, \mathbf{w}_{nn}, \mathbf{w}_{psl}) 
    & = \mathbf{w}_{psl}^T \mathbf{\Phi(\mathbf{y} , \mathbf{x}_{sy}, \mathbf{x}_{nn}, \mathbf{w}_{nn})}
    \label{eq:hl-mrf_energy_function}
    \end{align}
    Further, let $\mathbf{c} = [c_{i}]_{i = 1}^{q}$ be a vector of $q$ linear constraints in standard form, defining the feasible set 
    $\mathbf{\Omega}
        = \left\{
        \mathbf{y}, 
        \mathbf{x}_{sy}\, \vert \, c_{i}(\mathbf{y}, \mathbf{x}_{sy}) \leq 0, \, \forall i \in \{0, \cdots, q\}
        \right\}$.
    Then a \textbf{deep hinge-loss Markov random field}, $\mathcal{P}$, with random variables $\mathbf{y}$ conditioned on $\mathbf{x}_{sy}$ and $\mathbf{x}_{nn}$ is a probability density of the form
    \begin{align*}
        P(\mathbf{y} \vert \mathbf{x}_{sy}, \mathbf{x}_{nn}) = 
        \begin{cases}
        \frac{\exp (-E(\mathbf{y}, \mathbf{x}_{sy}, \mathbf{x}_{nn}, \mathbf{w}_{nn}, \mathbf{w}_{psl}))}{\int_{\mathbf{y} \vert \mathbf{y}, \mathbf{x}_{sy} \in \mathbf{\Omega}} \exp(-E(\cdot)) d\mathbf{y}} & (\mathbf{y}, \mathbf{x}_{sy}) \in \mathbf{\Omega} \\
        0 & o.w.
        \end{cases} 
    \end{align*}
    \label{def:hlmrf}
\end{definition}

Deep-HL-MRFs naturally fit into the NeSy-EBM framework.
The symbolic potentials of deep-HL-MRFs are the aggregated and scaled deep hinge-loss potentials:
\begin{align}
    &\psi_{\shortname{}}(\mathbf{y}, \mathbf{x}_{sy}, \mathbf{w}_{psl}, \mathbf{g}_{nn}(\mathbf{x}_{nn}, \mathbf{w}_{nn})) \nonumber\\
    & = \mathbf{w}_{psl} \Phi(\mathbf{y}, \mathbf{x}_{sy}, \mathbf{x}_{nn}, \mathbf{w}_{nn})
\end{align}
Then the energy function is the sum of symbolic potentials: 
    \begin{align}
    &E_{\shortname{}}(\mathbf{y}, \mathbf{x}_{sy}, \mathbf{x}_{nn}, \mathbf{w}_{nn}, \mathbf{w}_{psl}) \nonumber \\
    & = \sum_{i = 1}^{r} \psi_{\shortname{}, i}(\mathbf{y}, \mathbf{x}_{sy}, \mathbf{w}_{psl}, \mathbf{g}_{nn}(\mathbf{x}_{nn}, \mathbf{w}_{nn}))
    \end{align}
    
    \section{\shortname{} Inference and Learning}
\label{sec:inference_and_learning}

There is a clear connection between neural and symbolic inference in \shortname{} that allows any neural architecture to interact with symbolic reasoning in a simple and expressive manner.
The \shortname{} neural-symbolic interface and inference pipeline is shown in \figref{fig:neupsl-pipeline}.
\emph{Neural inference} is computing the output of the neural networks given the input $\mathbf{x}_{nn}$, i.e., computing $g_{nn, i}(\mathbf{x}_{nn}, \mathbf{w}_{nn, i})$ for all $i$.
\shortname{} \emph{symbolic inference} minimizes the energy function over $\mathbf{y}$:
\begin{align}
    \mathbf{y}^* = \argmin_{\mathbf{y} \vert (\mathbf{y}, \mathbf{x}_{sy}) \in \mathbf{\Omega}} E(\mathbf{y}, \mathbf{x}_{sy}, \mathbf{x}_{nn}, \mathbf{w}_{nn}, \mathbf{w}_{psl})
    \label{eq:neupsl_inference}
\end{align}
Note that the hinge-loss potentials are convex in $\mathbf{y}$ and hence, with the common constraint enforcing symbolic parameters to be non-negative, i.e., $\mathbf{w}_{psl} > 0$, the energy function is convex in $\mathbf{y}$.
Any scalable convex optimizer can be applied to solve \eqref{eq:neupsl_inference}.
\shortname{} uses the alternating direction method of multipliers \citep{boyd:ftml10}.

\shortname{} learning is the task of finding both neural and symbolic parameters, i.e., rule weights, that assign low energy to correct values of the output variables and higher energies to incorrect values.
Learning objectives are functionals mapping an energy function and a set of training examples $\mathcal{S} = \{(\mathbf{y}_i, \mathbf{x}_{sy, i}, \mathbf{x}_{nn, i}): i = 1, \cdots, P\}$ to a real-valued loss.
As the energy function for \shortname{} is parameterized by the neural weights $\mathbf{w}_{nn}$ and symbolic weights $\mathbf{w}_{psl}$, we express the learning objective as a function of $\mathbf{w}_{nn}$, $\mathbf{w}_{psl}$, and $\mathcal{S}$: $\mathcal{L}(\mathcal{S}, \mathbf{w}_{nn}, \mathbf{w}_{psl})$.
Learning objectives follow the standard empirical risk minimization framework and are therefore separable over the training examples in $\mathcal{S}$ as a sum of per-sample loss functions $L_{i}(\mathbf{y}_{i}, \mathbf{x}_{i}, \mathbf{x}_{nn, i}, \mathbf{w}_{nn}, \mathbf{w}_{psl})$.
Concisely, \shortname{} learning is the following minimization:
\begin{align*}
    &\argmin_{\mathbf{w}_{nn}, \mathbf{w}_{psl}} \mathcal{L} (\mathbf{w}_{nn}, \mathbf{w}_{psl}, \mathcal{S}) \\
    & \quad = \argmin_{\mathbf{w}_{nn}, \mathbf{w}_{psl}} \sum_{i = 1}^{P} L_{i}(\mathbf{y}_i, \mathbf{x}_{sy, i}, \mathbf{x}_{nn, i}, \mathbf{w}_{nn}, \mathbf{w}_{psl})
\end{align*}
In the learning setting, variables $\mathbf{y}_{i}$ from the training set $S$ are partitioned into vectors $\mathbf{y}_{i, t}$ and $\mathbf{z}_{i}$.
The variables $\mathbf{y}_{i, t}$ represent variables for which there is a corresponding truth value, while $\mathbf{z}_{i}$ represent latent variables.
Without loss of generality, we write $\mathbf{y}_{i} = (\mathbf{y}_{i, t}, \mathbf{z}_{i})$.

There are multiple losses that one could motivate for optimizing the parameters of an EBM.
Common losses, including the loss we present in this work, use the following terms:
\begin{align*}
    & \mathbf{z}_{i}^* = \argmin_{\mathbf{z} \vert ((\mathbf{y}_{i, t}, \mathbf{z}), \mathbf{x}) \in \mathbf{\Omega}}E((\mathbf{y}_{i, t}, \mathbf{z}), \mathbf{x}_{sy, i}, \mathbf{x}_{nn, i}, \mathbf{w}_{nn}, \mathbf{w}_{psl}) \\
    & \mathbf{y}_{i}^{*} = \argmin_{\mathbf{y} \vert (\mathbf{y}, \mathbf{x}_{i}) \in \mathbf{\Omega}} E(\mathbf{y}, \mathbf{x}_{sy, i}, \mathbf{x}_{nn, i}, \mathbf{w}_{nn}, \mathbf{w}_{psl})
\end{align*}
In words, $\mathbf{z}_{i}^{*}$ and $\mathbf{y}_{i}^{*}$ are the lowest energy states given $(\mathbf{y}_{i, t}, \mathbf{x}_{sy, i}, \mathbf{x}_{nn, i})$ and $(\mathbf{x}_{sy, i}, \mathbf{x}_{nn, i})$, respectively.
A special case of learning is when the per-sample losses are not functions of $\mathbf{z}_{i}^{*}$ and $\mathbf{y}_{i}^{*}$, and more specifically, the losses do not require any subproblem optimization.
We refer to this situation as \textit{constraint learning}.
Constraint learning reduces the time required per iteration at the cost of expressivity.

All interesting learning losses for \shortname{} are a composition of the energy function.
Thus, a gradient-based learning algorithm will require the following partial derivatives: 
\footnote{
    Note arguments of the energy function and symbolic potentials are dropped for simplicity, i.e.,
    $E(\cdot) = E(\mathbf{y}, \mathbf{x}_{sy, i}, \mathbf{x}_{nn, i}, \mathbf{w}_{nn}, \mathbf{w}_{psl})$, 
    $\phi(\cdot) = \phi(\mathbf{y}, \mathbf{x}_{sy}, \mathbf{x}_{nn}, \mathbf{w}_{nn}),$ and $\mathbf{g}_{nn}[i](\cdot) = \mathbf{g}_{nn}[i](\mathbf{x}_{nn}, \mathbf{w}_{nn})$.
}
\begin{align*}
    & \frac{\partial E(\cdot)}{\partial \mathbf{w}_{psl}[i]} = \Phi_{i}(\mathbf{y}, \mathbf{x}_{sy}, \mathbf{x}_{nn}, \mathbf{w}_{nn}) \\
    & \frac{\partial E(\cdot)}{\partial \mathbf{w}_{nn}[i]} = \mathbf{w}_{psl}^T \nabla_{\mathbf{w}_{nn}[i]} \Phi(\mathbf{y}, \mathbf{x}_{sy}, \mathbf{x}_{nn}, \mathbf{w}_{nn})
\end{align*}
Continuing with the derivative chain rule and noting the potential can be squared ($\alpha = 2$) or linear ($\alpha = 1$), the potential partial derivative with respect to $\mathbf{w}_{nn}[i]$ is the piece-wise defined function:\footnotemark[1]
\begin{align*}
    \frac{\partial \phi(\cdot)}{\partial \mathbf{w}_{nn}[i]} &= 
    \begin{cases}
        \frac{\partial}{\partial 
        \mathbf{g}_{nn}[i]} \phi(\cdot) \cdot \frac{\partial}{\partial \mathbf{w}_{nn}[i]} \mathbf{g}_{nn}[i](\cdot)
        & \alpha = 1 \\
        2 \cdot \phi(\cdot) \cdot \frac{\partial}{\partial \mathbf{g}_{nn}[i]} \phi(\cdot) \cdot \frac{\partial}{\partial \mathbf{w}_{nn}[i]} \mathbf{g}_{nn}[i](\cdot)
        & \alpha = 2\\
    \end{cases}\\
    \frac{\partial \phi(\cdot)}{\partial \mathbf{g}_{nn}[i]} &= 
    \begin{cases}
        0 & \phi(\cdot) = 0 \\
        \frac{\partial}{\partial g_{nn}[i]} l(\mathbf{y}, \mathbf{x}_{sy}, \mathbf{g}_{nn}(\mathbf{x}_{nn}, \mathbf{w}_{nn})) 
        & \phi(\cdot) > 0\\
    \end{cases}
\end{align*}
Since $l(\mathbf{y}, \mathbf{x}_{sy}, \mathbf{g}_{nn}(\mathbf{x}_{nn}, \mathbf{w}_{nn}))$ is a linear function, the partial gradient with respect to $\mathbf{g}_{nn}[i]$ is trivial.
With the partial derivatives presented here, standard backpropagation-based algorithms for computing gradients can be applied for both neural and symbolic parameter learning.

\textbf{Energy Loss}:
A variety of differentiable loss functions can be chosen for $\mathcal{L}$.
For simplicity, in this work, we present the \textit{energy loss}.
The energy loss parameter learning scheme directly minimizes the energy of the training samples, i.e., the per-sample losses are:
\begin{align*}
    L_{i}(\mathbf{y}_{i}, \mathbf{x}_{sy, i}, \mathbf{x}_{nn, i}, & \mathbf{w}_{nn}, \mathbf{w}_{psl}) \\
    & = E((\mathbf{y}_{i, t}, \mathbf{z}^{*}_{i}), \mathbf{x}_{sy, i}, \mathbf{x}_{nn, i}, \mathbf{w}_{nn}, \mathbf{w}_{psl})
\end{align*}
Notice that inference over the latent variables is necessary for gradient and objective value computations.
However, a complete prediction from \shortname{}, i.e., inference over all components of $\mathbf{y}$, is unnecessary.
Therefore the parameter learning problem is as follows:
\begin{align*}
    \argmin_{\mathbf{w}_{nn}, \mathbf{w}_{psl}} \sum_{i = 1}^{P} \min_{\mathbf{z} \in \mathbf{\Omega}} \mathbf{w}_{psl}^T \Phi((\mathbf{y}_{i, t}, \mathbf{z}), \mathbf{x}_{sy, i}, \mathbf{x}_{nn, i}, \mathbf{w}_{nn})
\end{align*}
With L2 regularization, the \shortname{} energy function is strongly convex in all components of $\mathbf{y}_{i}$.
Thus, by \citenoun{danskin:siam66}, the gradient of the energy loss, $L_{i}(\cdot)$, with respect to $\mathbf{w}_{psl}$ at $\mathbf{y}_{i}, \mathbf{x}_{i}, \mathbf{x}_{nn, i} \mathbf{w}_{nn}$ is:
\begin{align*}
    \nabla_{\mathbf{w}_{psl}} L_{i}(\mathbf{y}_i, \mathbf{x}_{sy, i}, & \mathbf{w}_{nn}, \mathbf{w}_{psl}) \\
    & = \Phi((\mathbf{y}_{i, t}, \mathbf{z}_{i}^*), \mathbf{x}_{sy, i}, \mathbf{x}_{nn, i}, \mathbf{w}_{nn})
\end{align*}
Then the per-sample energy loss partial derivative with respect to $\mathbf{w}_{nn}[j]$ at $\mathbf{y}_{i}, \mathbf{x}_{sy, i}, \mathbf{x}_{nn, i}, \mathbf{w}_{psl}$ is:
\begin{align*}
    & \frac{\partial L_{i}(\mathbf{y}_i, \mathbf{x}_{sy, i}, \mathbf{x}_{nn, i}, \mathbf{w}_{nn}, \mathbf{w}_{psl})}{\partial \mathbf{w}_{nn}[j]} \\
    & = 
    \sum_{r = 1}^{R} \mathbf{w}_{psl}[r] \sum_{q \in \tau_r} \frac{\partial \phi_{q}((\mathbf{y}_{i, t}, \mathbf{z}_{i}^*), \mathbf{x}_{sy, i}, \mathbf{x}_{nn, i}, \mathbf{w}_{nn})}{\partial \mathbf{w}_{nn}[j]}
\end{align*}
Details on the learning algorithms and accounting for degenerate solutions of the energy loss are included in supplementary materials.
    
    \begin{figure*}[t]
    \centering

    \includegraphics[width=0.65 \textwidth]{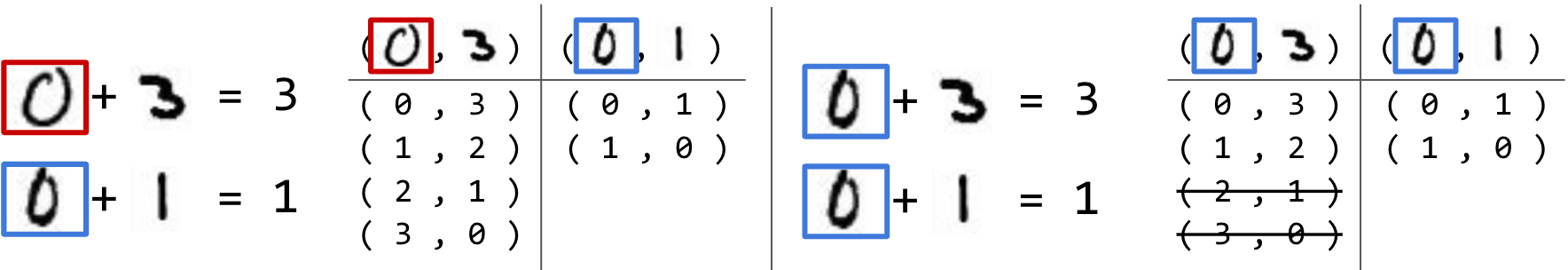}

    \caption{
        Example of overlapping MNIST images in \textbf{\additionlongname{}1}. 
        On the left, distinct images are used for each zero. On the right, the same image is used for both zeros.
    }
    \label{fig:overlapping-digits-example}
\end{figure*}

\begin{figure*}[t]
    \centering
    \begin{subfigure}[b]{\textwidth}
        \centering
        \includegraphics[width=\textwidth]{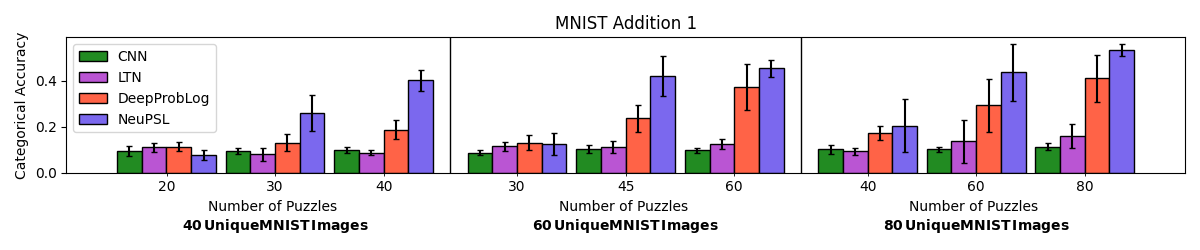}
    \end{subfigure}
    \begin{subfigure}[b]{\textwidth}
        \centering
        \includegraphics[width=\textwidth]{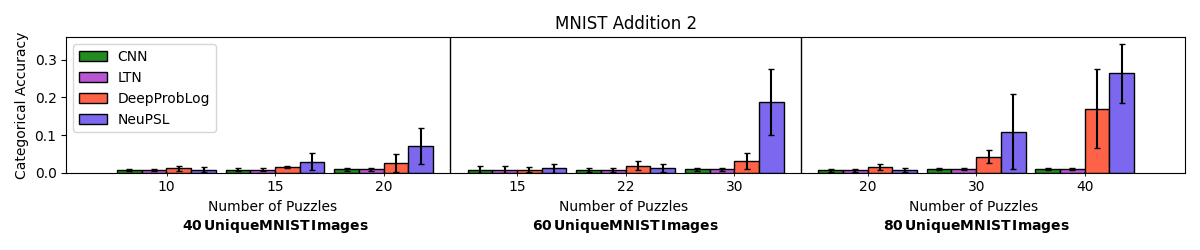}
    \end{subfigure}
    \caption{
        Average test set accuracy and standard deviation on \textbf{\additionlongname{}} datasets with varying amounts of overlap.
    }
    \label{fig:mnist_addition_results}
\end{figure*}

\section{Experimental Evaluation}
\label{sec:evaluation}

We evaluate \shortname{}'s prediction performance and inference time on three tasks to demonstrate the significance of joint symbolic inference and learning.
\shortname{}, implemented using the open-source PSL software package, can be integrated with any neural network library 
(here, we used TensorFlow).\footnote{Implementation details, hyperparameters, network architectures, hardware, and \shortname{} models, are described in the Appendix.

Code and Data: \url{https://github.com/linqs/neupsl-ijcai23}

Appendix: \url{https://arxiv.org/abs/2205.14268}
}
Our investigation addresses the following questions:
Q1) Can neuro-symbolic methods provide a boost over conventional purely data-driven neural models?
Q2) Can we effectively leverage structural relationships across training examples through joint reasoning?
Q3) How does \shortname{} compare with other neuro-symbolic methods in terms of time efficiency on large scale problems?

\subsection{MNIST Addition}
\label{sec:experiments-mnist-addition}
The first set of experiments are conducted on a variation of MNIST Addition, a widely used NeSy evaluation task \citep{manhaeve:neurips18}.
The task involves determining the sum of two lists of MNIST images.
For example, a \textbf{\additionlongname{}1} addition is $(\big[\inlinegraphics{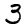}\big] + \big[\inlinegraphics{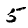}\big] = \mathbf{8})$,
and a \textbf{\additionlongname{}2} addition is
$(\big[\inlinegraphics{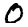}, \inlinegraphics{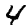}\big] + \big[\inlinegraphics{images/MNIST-3.png}, \inlinegraphics{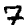}\big] = \mathbf{41})$.
The challenge stems from the lack of labels for the MNIST images in the addition equation.
Only the final sum of the equation is given, leaving the task of identifying the individual digits and determining their values up to the model being used.

While \shortname{} proves to be successful in the original \textbf{\additionlongname{}} setting 
(appendix for further details), here we are interested in exploring the power of joint inference and learning capabilities in NeSy systems.
We introduce a variant of the \textbf{\additionlongname{}} task in which digits are reused across multiple addition examples, i.e., we introduce \emph{overlap}.
\figref{fig:overlapping-digits-example} demonstrates the process of introducing overlap and how joint models narrow the space of possible labels when MNIST images are re-used.
For instance, in the scenario presented in \figref{fig:overlapping-digits-example}, the same MNIST image of a zero is utilized in two separate additions.
To comply with both addition constraints, the potential label space is restricted and can no longer include options such as two or three, as they would violate one of the addition rules.
In contrast, a model performing independent reasoning would have no way of enforcing this constraint across examples.

In the overlap variant of \textbf{\additionlongname{}}, we focus on low-data settings to understand whether NeSy systems' joint reasoning can effectively leverage additional structure to overcome a lack of data.
To create overlap, we begin with a set of $n$ unique MNIST images from which we re-sample to create $(n + m) / 2$ \textbf{\additionlongname{}1} and $(n + m) / 4$ \textbf{\additionlongname{}2} additions.
We vary the amount of overlap with $m \in \{0, n/2, n\}$ and compare performance with $n \in \{40, 60, 80\}$.
Results are reported over ten test sets of $1,000$ MNIST images with overlap proportional to the respective train set.

\figref{fig:mnist_addition_results} summarizes average performance for varying overlap settings.
Each panel varies the number of additions for a set number of unique MNIST images.
For example, the upper left panel presents the results obtained for \additionlongname{}1 with 40 unique images used to generate 20, 30, and 40 additions.
Initially, there is not enough structure from the additions with no overlap  for symbolic inference to discern the correct digit labels for training the neural models.
Then, despite the number of unique MNIST images remaining the same, as the number of additions increases, \dplshortname{} and \shortname{} improve their prediction performance by leveraging the added joint information (Q2).
In all cases, \shortname{} performs best and uses the added structure most efficiently.
\ltnshortname{} and the CNN baseline benefit the least from joint information, a consequence of both learning and inference being performed independently across batches of additions (Q1).

\begin{figure*}[t]
    \centering
    \includegraphics[width=0.95\textwidth]{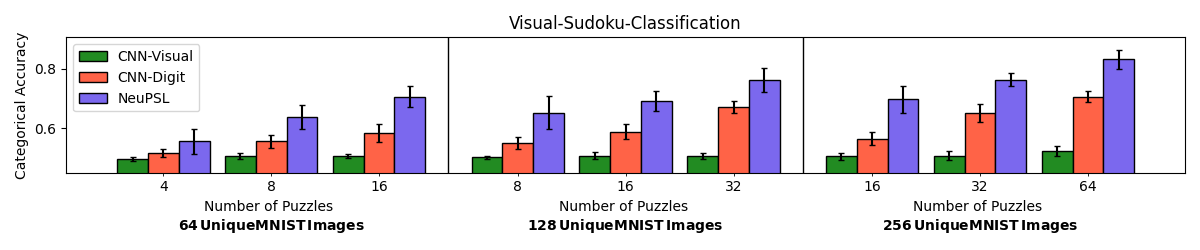}
    \caption{
        Average test set accuracy and standard deviation on \textbf{\sudokulongname{}} with varying amounts of overlap.
    }
    \label{fig:visual_overlap_results}
\end{figure*}

\begin{table*}[t]
    \centering
    \begin{tabular}{c|cc|cc}
        \toprule
            \multirow{2}{*}{Method} & \multicolumn{2}{c}{Citeseer} & \multicolumn{2}{c}{Cora} \\
            & (Accuracy) & (Seconds) & (Accuracy) & (Seconds) \\
        \midrule
            Neural$_{PSL}$      & 57.76 ± 1.71 & - & 57.12 ± 2.13 & - \\
            LP$_{PSL}$          & 50.88 ± 1.18 & - & 73.32 ± 2.39 & - \\
        \midrule
            DeepProbLog         & timeout & timeout & timeout & timeout \\
            DeepStochLog        & 61.30 ± 1.44 & 34.42 ± 0.87 & 69.96 ± 1.47 & 165.28 ± 4.49 \\
            GCN                 & 67.50 ± 0.57 & 3.10 ± 0.04 & 79.52 ± 1.13 & 1.31 ± 0.01 \\
            \citationlp{}       & 67.34 ± 1.17 & 3.98 ± 0.05  & 76.80 ± 2.27 & 4.00 ± 0.31 \\
            \citationfs{}       & \textbf{68.48 ± 1.22} & 4.23 ± 0.05 & \textbf{81.22 ± 0.79} & 4.07 ± 0.14 \\
        \bottomrule
    \end{tabular}
    
    \caption{Test set accuracy and inference runtime in seconds on two citation network datasets.}
    \label{tab:citation-results}
\end{table*}

\subsection{Visual Sudoku Classification}
\label{sec:visual-sudoku}

Inspired by the Visual Sudoku problem proposed by \citenoun{wang:icml19}, \citenoun{augustine:nesy22} introduced a novel NeSy task, \textbf{\sudokulongname{}}. 
In this task, 4x4 Sudoku puzzles are constructed using unlabeled MNIST images. 
The model must identify whether a puzzle is correct, i.e., no duplicate digits in any row, column, or square.
Therefore this task does not require learning the underlying label for images but rather whether an entire puzzle is valid.
For instance, $\big[\inlinegraphics{images/MNIST-3.png}\big]$ does not need to belong to a $"3"$ class, instead $\big[\inlinegraphics{images/MNIST-3.png}\big]$ and $\big[\inlinegraphics{images/MNIST-4.png}\big]$ need to be labeled as different symbols.
Similar to MNIST-Add we explore an overlap variant in low-data settings, with overlapping MNIST images across puzzles.

We compare \shortname{} with two baselines, \vsBaselineVisual{} and \vsBaselineDigit{}.
The first, \vsBaselineVisual{}, takes the pixels for a Sudoku puzzle as input and outputs the probability the puzzle is valid.
The second, \vsBaselineDigit{}, is provided the (unfair) advantage of all sixteen image labels as input.
We use this to verify whether a neural model can learn Sudoku rules.
Scalably developing LTN and DPL models in this new setting is not straightforward due to the large dimensionality of the output space.
A non-expert implementation of a visual sudoku model in DPL and LTN may result in suboptimal reports on model performance and are therefore not included.

\figref{fig:visual_overlap_results} shows the accuracy of \shortname{} and CNN models on \textbf{\sudokulongname{}} with varying amounts of overlap.
\vsBaselineVisual{} and \vsBaselineDigit{} struggle to leverage the problem structure and fail to generalize even the highest data and overlap setting with 256 MNIST images across $64$ puzzles.
However, \shortname{} achieves $70$\% accuracy using roughly $64$ MNIST images across $16$ puzzles, again showing it efficiently leverages joint information across training examples (Q1 and Q2).
This is a particularly impressive result as the neural network in the \shortname{} model was trained to be a $93$\% 4-digit distinguisher without digit labels.

\subsection{Citation Network Node Classification}
\label{sec:experiments-citation-network-node-classification}
In our final experiment, we evaluate the performance of \shortname{} on two widely studied citation network node classification datasets: Citeseer and Cora \citep{sen:aim08}.
In these datasets, symbolic models have the potential to improve predictions by leveraging the homophilic structure of the citation network, i.e., two papers connected in the network are more likely to have the same label. 
This setting differs from \textbf{\sudokulongname{}} and \textbf{\additionlongname{}} as the symbolic relations are not always true.
Moreover, the symbolic relations can be defined over a general and potentially large number of nodes in the network, i.e., a node can be connected to any number of neighbors.

We propose two \shortname{} models for citation network node classification.
Both models integrate a neural network that uses a paper's features to provide an initial classification, which is then adjusted via symbolic reasoning.
The first model, \citationlp{} (Label Propagation), directly uses the bag-of-words feature vector, while the second model, \citationfs{} (Label + Feature Propagation), first performs the feature construction procedure as described in \citenoun{wu:icml19} to obtain a richer representation to provide to the neural model.
We examine the runtime and model performance of NeSy methods \citationlp{}, \citationfs{}, DPL and its scalable extension, DeepStochLog \citep{winters:aaai22}, and a Graph Convolutional Network (GCN) \citep{kipf:iclr17}. 
Additionally, we include the performance of two baselines, LP$_{PSL}$ and Neural$_{PSL}$.
These baselines represent the distinct symbolic and neural components used in the NeuPSL$_{LP}$ model but perform only neural or symbolic reasoning, not both.
We averaged the results over ten randomly sampled splits using 5\% of the nodes for training, 5\% of the nodes for validation, and 1000 nodes for testing.

\tabref{tab:citation-results} shows DeepStochLog, GCN, and NeuPSL all outperform the independent baselines (Q1), with \citationfs{} performing the best. 
These results demonstrate the power of using NeSy systems to effectively leverage structure to improve prediction performance.
Additionally, \shortname{} is capable of scaling its joint inference process to larger structures, achieving higher accuracy with an $8$ and $40$ times speed up over DeepStochLog in Citeseer and Cora, respectively (Q3).
Surprisingly, \shortname{} also achieves a higher prediction performance than even a GCN model while using significantly fewer trainable parameters.

    \section{Conclusion}
In this paper, we introduced \shortname{}, a novel NeSy framework that integrates neural architectures and a tractable class of graphical models for jointly reasoning over symbolic relations and showed its utility across a range of neuro-symbolic tasks.
There are many avenues for future work, including exploring different learning objectives, such as ones that balance traditional neural and energy-based losses and new application domains. 
Each of these is likely to provide new challenges and insights.

    \section*{Acknowledgments}
This work was partially supported by the National Science Foundation grant CCF-2023495.

    \bibliographystyle{named}
    \bibliography{ijcai23}

    \begin{appendix}
        \newpage{}
\section{Appendix}

The appendix includes the following sections: Limitations, Formulating Existing NeSy Frameworks as NeSy-EBMs, Joint Reasoning in NeSy-EBMs, \shortname{} Parameter Learning, Dataset Details, \shortname \ Models, Baseline Models, Extended Evaluation Details, and Computational Hardware Details.
        \section{Limitations}

Practitioners applying NeuPSL should consider the following three limitations.
First, NeuPSL operates on real-valued logic, which improves scalability but is a relaxation of the original problem.
This relaxation may overlook nuances (e.g., integer constraints) of the original task.
Second, while NeuPSL demonstrates excellent performance in solving joint symbolic inference tasks, it comes at the expense of a higher inference runtime than a purely neural model.
The computational demands of NeuPSL may limit its applicability in scenarios where real-time processing is necessary.
Lastly, NeuPSL is trained in this work with the energy learning loss.
Using this loss reduces the energy of the truth data but does not necessarily align with a downstream evaluation metric, and we have identified some degenerate solutions (\appref{sec:energy_degenerate_solutions}).
Exploring the adaptation of NeuPSL to support different learning losses is an interesting avenue for future research.

        \section{Formulating Existing NeSy Frameworks as NeSy-EBMs}
\label{sec:appendix_other_NeSy-EBMS}

This section shows how to formulate two popular NeSy frameworks, \dpllongname{} (\dplshortname{}) \citep{manhaeve:neurips18} and \ltnshortname{} (\ltnshortname{}) \citep{badreddine:ai22}, as NeSy-EBMs.

\subsection{\dpllongname{}} 
\dpllongname{} (\dplshortname{}) extends the probabilistic programming language ProbLog \citep{deraedt:ijcai07}.
A ProbLog program consists of (i) a set of probabilistic facts $\mathcal{F}$ of the form $p\, :: \, f$ where $p$ is a probability and $f$ is a $\{0, 1\}$ valued symbolic variable and (ii) a set $\mathcal{R}$ of symbolic statements or rules.
The following ProbLog program is a common example that models the likelihood of a burglary or an earthquake, given an alarm was sounded and is also presented in \cite{manhaeve:ai21}
\begin{align*}
    &\textrm{\# Probabilistic Facts}\\
    & 0.1 \, :: \, \textrm{burglary}. \quad 0.2 \, :: \, \textrm{earthquake}.\\
    & 0.5 \, :: \, \textrm{hearsAlarm(mary)}. \quad 0.4 \, :: \, \textrm{hearsAlarm(john)}. \\
    &\textrm{\# Rules}\\
    & \textrm{alarm} \, :- \, \textrm{earthquake}. \\
    & \textrm{alarm} \, :- \, \textrm{burglary}. \\
    & \textrm{calls(X)} \, :- \, \textrm{alarm, hearsAlarm(X)}. \\
\end{align*}
A subset of the probabilistic facts $F \subseteq \mathcal{F}$ defines a possible instantiation, or world:
\begin{align*}
    t_{F} := F \cup \{ f \, \vert \, \mathcal{R} \cup F \, \vDash f \}.
\end{align*}
For the example, 
$
    t_{\{\textrm{burglary, hearsAlarm(mary)}\}} = \{ \textrm{burglary, hearsAlarm(mary), alarm, calls(mary)} \}
$.
Then, the probability of a world, $P(t_{F})$, is the product of the probabilities of the probabilistic facts in the world:
\begin{align*}
    P(t_{F}) := \Pi_{\mathbf{f}[i] \in F} \mathbf{p}[i] \Pi_{\mathbf{f}[i] \in \mathcal{F} \setminus F}(1 - \mathbf{p}[i]).
\end{align*}
For the running example, 
$
    P(t_{\{\textrm{burglary, hearsAlarm(mary)}\}}) = 0.1 \cdot 0.5 \cdot (1- 0.2) \cdot (1 - 0.4)
$.
Finally, the probability of a query atom, $q$, is defined as the sum of the probabilities of the worlds containing $q$:
\begin{align*}
    P(q) := \sum_{F \in \mathcal{F} \, \vert \, q \in t_{F}} P(t_{F}).
\end{align*}

ProbLog inference, specifically as it is applied in the deep extension proposed by \citenoun{manhaeve:neurips18}, is a marginal inference problem.
Specifically, the inference task is computing the marginal probability of a single query atom as shown above.
This is equivalent to finding the weighted model count (WMC) of the worlds where the query atom is true.
Thus, the exact marginal inference problem in ProbLog is \#P-complete, i.e., it is at least NP-hard. 
This means that computing the exact probability of a query in a ProbLog program is a computationally challenging problem that requires exponential time in the worst case.
Therefore, exact marginal inference in ProbLog is generally only feasible for small or moderately sized problems. 
For larger problems with more variables, approximate inference techniques are used to obtain approximate probabilities more efficiently \cite{deraedt:ijcai07, bogdan:ecsqaru13}.

\dplshortname{} introduces syntax and semantics to ProbLog to support specifying probabilities of events with neural networks \cite{manhaeve:neurips18, manhaeve:ai21}.
Specifically, a set of neural annotated disjunctions (nADs) are specified by a user and take the form:
\begin{align*}
     &\textrm{nn}(id, \mathbf{v}, u_{1}) \, :: \, \textrm{h}(\mathbf{v}, u_{1}) \, ; \,\\
     & \quad \cdots \, ; \,
     \textrm{nn}(id, \mathbf{v}, u_{n}) \, :: \, \textrm{h}(\mathbf{v}, u_{n}) \, ; \,
     \vDash b_{1}, \cdots, b_{m},
\end{align*}
where the $b_{i}$ are atoms, $\mathbf{v}$ is a vector of features that the neural component, identified by $id$, has access to.
Moreover, the output of the neural component, $\textrm{nn}(id, \mathbf{v}, u_{i})$, is interpreted as the probability that the atom $h_{i}$ is true and the sum of the outputs of the neural model must sum to 1:
$\sum_{i = 1}^{n} \textrm{nn}(id, \mathbf{v}, u_{1}) = 1$.
The interpretation of an annotated disjunction is that whenever all of the atoms $b_{1}, \cdots, b_{m}$ are true, then each $h_{i}$ will be true with probability $\textrm{nn}(id, \mathbf{v}, u_{i})$.

Inference in \dplshortname{} is exactly the same as ProbLog marginal inference with a single query atom, except a forward pass is made with the neural network to compute the probabilities of the nADs.
Learning the parameters of the \dplshortname{} model is the task of finding the setting of the trainable parameters, denoted by $\mathbf{x}$, that minimizes a sum of losses, $L()$.
Each loss measures the distance between a vector of $n$ desired probabilities $\mathbf{p}_{true}$ and $[P(q_{1}), \cdots, P(q_{n})]$, the marginal inference values predicted by \dplshortname{}:
\begin{align*}
    \argmin_{\mathbf{x}} \frac{1}{n} \sum_{i = 1}^{n} L(P(q_{i}), \mathbf{p}_{true}[i]).
\end{align*}

Though instantiating the marginal probability function is non-trivial and computationally expensive, marginal inference ultimately reduces to a series of differentiable algebraic operations and is therefore differentiable.
\dplshortname{} uses stochastic gradient descent to find parameters minimizing the training objective.

\dplshortname{} is a NeSy-EBM.
The fact probabilities, $p$, are partitioned into the observed NeSy-EBM symbolic variables $\mathbf{x}_{sy}$, the vector of symbolic parameters, $\mathbf{w}_{sy}$, and neural network outputs, $\mathbf{g}(\mathbf{x}_{nn}, \mathbf{w}_{nn})$.
Without loss of generality, suppose 
\begin{align*}
    \mathbf{p} = \begin{bmatrix}
        \mathbf{x}_{sy} \\
        \mathbf{w}_{sy} \\
        \mathbf{g}(\mathbf{x}_{nn}, \mathbf{w}_{nn})
    \end{bmatrix}.
\end{align*}
The query atoms, i.e., the atoms present in the \dplshortname{} model that are not specified in the set of probabilistic facts, correspond to the symbolic variables $\mathbf{y}$.

The definition of the \dplshortname{} symbolic potentials and energy function are tied to the inference task; a different definition of the symbolic potential and energy function is used to implement marginal versus MAP inference.
As previously mentioned, \dplshortname{} predictions are most commonly obtained by performing marginal inference for a single query atom.
Moreover, a consequence of the \dplshortname{} semantics is that the marginal inference problem reduces to an analytical expression composed of only product and sum operations.
Thus, from the NeSy EBM perspective, to implement marginal inference \dplshortname{} interprets a program with a set of probabilistic facts and data to define a symbolic potential for every marginal probability and then the energy function is simply the sum of the symbolic potentials.
On the other hand, for MAP inference, \dplshortname{} creates a symbolic potential for every possible world and the energy function is equivalent to the negative of the joint probability distribution implied by the \dplshortname{} program.
We will only formally cover the marginal inference case.

The probability of a world $t_{F}$, defined by the subset of probabilistic facts $F \in \mathcal{F}$ is a function of the \dplshortname{} fact probabilities, $\mathbf{p}$, and hence is a function of $\mathbf{x}_{sy}$, $\mathbf{w}_{sy}$, and $\mathbf{g}(\mathbf{x}_{nn}, \mathbf{w}_{nn})$:
\begin{align*}
    & P_{\mathbf{t}_{F}}(\mathbf{x}_{sy}, \mathbf{w}_{sy}, \mathbf{g}(\mathbf{x}_{nn}, \mathbf{w}_{nn})) \\
    & \quad := \bigg( \Pi_{\mathbf{x}_{sy}[j] \in F} \mathbf{x}_{sy}[j]  \bigg) \cdot  \bigg( \Pi_{\mathbf{x}_{sy}[j] \in \mathcal{F} \setminus F} (1 - \mathbf{x}_{sy}[j]) \bigg ) \\
    & \quad \quad \cdot \bigg( \Pi_{\mathbf{w}_{sy}[j] \in F} \mathbf{w}_{sy}[j]  \bigg) \cdot  \bigg( \Pi_{\mathbf{w}_{sy}[j] \in \mathcal{F} \setminus F} (1 - \mathbf{w}_{sy}[j]) \bigg )\\
    & \quad \quad \cdot \bigg( \Pi_{\mathbf{g}(\mathbf{x}_{nn}, \mathbf{w}_{nn})[j] \in F} \mathbf{g}(\mathbf{x}_{nn}, \mathbf{w}_{nn})[j]  \bigg) \\
    & \quad \quad \cdot \bigg( \Pi_{\mathbf{g}(\mathbf{x}_{nn}, \mathbf{w}_{nn})[j] \in \mathcal{F} \setminus F} (1 - \mathbf{g}(\mathbf{x}_{nn}, \mathbf{w}_{nn})[j]) \bigg).
\end{align*}
Then, as in ProbLog, the marginal probability of a query atom is a function of the probabilities of the worlds.
For the world $t_{F}$, defined by the subset of probabilistic facts $F \subseteq \mathcal{F}$, let $\chi_{t_{F}}[\cdot]$ be the indicator function identifying if a setting of the variables $\mathbf{y}$ matches the world $t_{F}$:
{\small
\begin{align*}
    \chi_{t_{F}}[\hat{\mathbf{y}}] := 
    \begin{cases} 
    1 & ((\hat{\mathbf{y}}[i] = 1) \implies \mathbf{y}[i] \in t_{F}) \, \forall i \in \{1, \cdots, n_{\mathbf{y}} \} \\
    0 & \textrm{o.w.}
    \end{cases}.
\end{align*}
}%
With $\chi_{t_{F}}[\mathbf{y}]$, it is also possible to write the marginal probability of a variable as function of $\mathbf{x}_{sy}, \mathbf{w}_{sy}$, and $\mathbf{g}(\mathbf{x}_{nn}, \mathbf{w}_{nn})$:
{\small
\begin{align*}
    & P_{\mathbf{y}[i]}(\mathbf{x}_{sy}, \mathbf{w}_{sy}, \mathbf{g}(\mathbf{x}_{nn}, \mathbf{w}_{nn})) \\
    & \, := \sum_{\hat{\mathbf{y}} \in \{0, 1\}^{n_{\mathbf{y}}}} \hat{\mathbf{y}}[i] \left (\sum_{F \in \mathcal{P}(\mathcal{F})} \chi_{t_{F}}[\hat{\mathbf{y}}] P_{\mathbf{t}_{F}}(\mathbf{x}_{sy}, \mathbf{w}_{sy}, \mathbf{g}(\mathbf{x}_{nn}, \mathbf{w}_{nn})) \right).
\end{align*}
}%
Let $d: [0, 1] \times [0, 1] \to \mathbb{R}$ be a metric quantifying the distance between its two arguments.
For each variable $\mathbf{y}[i]$ for $i \in \{1, \cdots, n_{\mathbf{y}}\}$ define a symbolic potential:
{\small
\begin{align*}
    &\psi_{\dplshortname{}, i}(\mathbf{y}, \mathbf{x}_{sy}, \mathbf{w}_{sy}, \mathbf{g}(\mathbf{x}_{nn}, \mathbf{w}_{nn})) \\
    & \, := d(\mathbf{y}[i], P_{\mathbf{y}[i]}(\mathbf{x}_{sy}, \mathbf{w}_{sy}, \mathbf{g}(\mathbf{x}_{nn}, \mathbf{w}_{nn}))).
\end{align*}
}%
Let $\Psi_{\dplshortname{}}(\cdot) := \left [ \psi_{\dplshortname{}, t_{F_{i}}}(\cdot) \right ]_{i = 1}^{n_{\mathbf{y}}}$ be the vector of all $n_{\mathbf{y}}$ symbolic potentials.
The energy function to produce marginal inference \dplshortname{} predictions is then the summation of all the symbolic potentials:
{\small
\begin{align*}
    & E_{\dplshortname{}}(\Psi_{\dplshortname{}}(\mathbf{y}, \mathbf{x}_{sy}, \mathbf{w}_{sy}, \mathbf{g}(\mathbf{x}_{nn}, \mathbf{w}_{nn}))) \\
    & \, := \sum_{i = 1}^{n_{\mathbf{y}}} \Psi_{\dplshortname{}}(\mathbf{y}, \mathbf{x}_{sy}, \mathbf{w}_{sy}, \mathbf{g}(\mathbf{x}_{nn}, \mathbf{w}_{nn}))[i].
\end{align*}
}%
Clearly, the optimal value of the energy function is $0$ and is achieved at the unique setting of the variables matching their corresponding marginal probability.
Thus inference is equivalent to evaluating the marginal probabilities for each variable.

\subsection{\ltnlongname{}}

\ltnlongname{}s (\ltnshortname{}) forwards deep neural network predictions into functions representing symbolic relations with real-valued or fuzzy logic semantics \cite{badreddine:ai22}.
The fuzzy logic functions are combined using a \emph{formula aggregator} to define a satisfaction level.
\citenoun{badreddine:ai22} suggest using the product real logical semantics to translate logical statements, i.e., given two truth values $a$ and $b$ in $[0, 1]$:
\begin{align*}
    \neg(a) &:= 1 - a \nonumber \\
    \land(a, b) &:= a \cdot b \nonumber \\
    \vee(a, b) &:= a + b - a \cdot b \nonumber \\
    \implies(a, b) &:= a + b - a \cdot b 
\end{align*}
Additionally, generalized mean semantics for existential and universal quantifiers are used for collections of truth values $\mathbf{a} = [a]_{i = 1}^{n}$:
\begin{align*}
    \exists(\mathbf{a}) &:= \left( \frac{1}{n} \sum_{i = 1}^{n} a_{i}^p \right)^{\frac{1}{p}} \\
    \forall(\mathbf{a}) &:= 1 - \left( \frac{1}{n} \sum_{i = 1}^{n} (1 - a_{i})^p \right)^{\frac{1}{p}},
    \label{eq:real-squantifer}
\end{align*}
where $p \in \mathbb{R}_{+}$ is a hyperparameter.
For example, consider the logical statement 
\begin{align*}
    \exists v \in \mathcal{V} \left( P(u, v) \land Q(v)\right).
\end{align*} 
\ltnshortname{} instantiate predicate arguments with features.
Let $\mathbf{X}_{\mathcal{U}}$ and $\mathbf{X}_{\mathcal{V}}$ be collections of variable feature vectors such that $\mathbf{X}[u]$ and $\mathbf{X}[v]$ are the feature vectors corresponding to the entities $u$ and $v$, respectively.
Furthermore, the predicate values are either provided by a deep neural network output or are values representing observations or a potential prediction.
For instance, the predicate $P(u, v)$ in the example can be instantiated as the output of a deep neural network parameterized by its weights $\mathbf{w}_{nn}$ and represented by the function: $nn(\mathbf{X}[u], \mathbf{X}[v]; \mathbf{w})$ which takes the two feature vectors, corresponding to the arguments $u$ and $v$, respectively, to a value in $[0, 1]$.
Then, $Q(v)$ could be a constant from $[0, 1]$.
Let $\mathbf{x}_{\mathcal{Q}}$ be a vector of scalars from $[0, 1]$ such that $\mathbf{x}_{\mathcal{Q}}[v]$ represents the predicate value for $Q(v)$.
Then, the logical statement in the example is a composition of the specified real-logic operators and quantifiers.
For a provided instance of the argument $u$ the real-valued logic function for the example is:
{\small
\begin{align*}
    & h_{u}(\mathbf{X}_{\mathcal{U}}, \mathbf{X}_{\mathcal{V}}, \mathbf{x}_{\mathcal{Q}}; \mathbf{w}) \\
    & := \bigg( \frac{1}{\Vert \mathcal{V} \Vert} \sum_{v \in \mathcal{V}} \big( nn(\mathbf{X}[u], \mathbf{X}[v]; \mathbf{w}) \cdot \mathbf{x}_{Q}[v] \big)^p \bigg)^{\frac{1}{p}}.
\end{align*}
}%
Using the generalized mean semantics for the universal quantifier as the formula aggregator, the satisfaction level of the \ltnshortname{} model prediction is:
{\small
\begin{align*}
    G(\mathbf{w}) := 1 - \bigg(\frac{1}{\Vert \mathcal{U} \Vert} \sum_{u \in \mathcal{U}} \big(1 - h_{u}(\mathbf{X}_{\mathcal{U}}, \mathbf{X}_{\mathcal{V}}, \mathbf{x}_{\mathcal{Q}}; \mathbf{w}) \big)^p \bigg)^{\frac{1}{p}}.
\end{align*}
}%
There are many ways to instantiate an LTN depending on the modeler's choice of real-logic semantics, the formula aggregator, and the logical relations.
The example above illustrates a common setting of the real-logic semantics and the formula aggregator for a specific composition of logical formula.

The parameters of the \ltnshortname{} are the deep neural network weights.
Learning is the task of finding a setting of the weights which maximize the satisfaction of an aggregated set of logical formula instantiated with observations and features:
\begin{align*}
    \mathbf{w}^{*} = \argmax_{\mathbf{w}} G(\mathbf{w}) .
\end{align*}
In other words, learning in LTNs can be understood as optimizing under
first-order logic constraints relaxed into a loss function.
There are a variety of real-valued logical semantics and formula aggregators that result in the satisfaction level function $G(\mathbf{w})$ being differentiable with respect to the weights.
Given a trained set of parameters obtained by learning, $\mathbf{w}^{*}$, inference is presented as querying the truth value of an instantiated predicate or logical formula.
A prediction in \ltnshortname{} in a multi-class or joint output setting such is obtained by evaluating the truth-value of all possible outputs and returning the highest valued configuration, i.e., the state with maximum satisfaction.

Through the lens of NeSy-EBMs, the system's fuzzy logic semantics define the symbolic potentials and the formula aggregator is the energy function.
More formally, the NeSy-EBM unobserved and observed symbolic variables and neural network outputs partition the instantiated predicates of the real-valued logic functions $h_{i}$.
Each of the $m$ real-valued logic functions can be written as a function of only the symbolic variables and the neural network outputs: $h_{i}(\mathbf{y}, \mathbf{x}_{sy}, \mathbf{g}(\mathbf{x}_{nn}, \mathbf{w}_{nn}))$.
The functions $h_{i}$ are the symbolic potentials of the NeSy-EBM:
{\small
\begin{align*}
    \psi_{LTN,i}(\mathbf{y}, \mathbf{x}_{sy}, \mathbf{w}_{sy}, \mathbf{g}(\mathbf{x}_{nn}, \mathbf{w}_{nn}) := h_{i}(\mathbf{y}, \mathbf{x}_{sy}, \mathbf{g}(\mathbf{x}_{nn}, \mathbf{w}_{nn}).
\end{align*}
}%
Let $\Psi_{\ltnshortname{}}(\cdot) := \left [ \psi_{\ltnshortname{}, i}(\cdot) \right]_{i = 1}^{m}$ be the vector of all $m$ symbolic potentials.
Then, the formula aggregator defines the energy function.
Using the generalized mean semantics for the universal quantifier, the NeSy-EBM energy function for \ltnshortname{} is:
{\small
\begin{align*}
    & E_{LTN}(\Psi_{\ltnshortname{}}(\mathbf{y}, \mathbf{x}_{sy}, \mathbf{w}_{sy}, \mathbf{g}(\mathbf{x}_{nn}, \mathbf{w}_{nn}))) \\
    & := \bigg(\frac{1}{m} \sum_{i = 1}^{m} \big( 1 - \Psi_{\ltnshortname{}}(\mathbf{y}, \mathbf{x}_{sy}, \mathbf{w}_{sy}, \mathbf{g}(\mathbf{x}_{nn}, \mathbf{w}_{nn}))[i] \big)^{p} \bigg)^{\frac{1}{p}}.
\end{align*}
}%
The \ltnshortname{} framework is general and the scalability and expressivity of the system are dependent on the modeler's choice of the domain of the unobserved variables: $\mathcal{Y}$, the real-valued logical semantics, and the formula aggregator.
Furthermore, notice there is no explicit use of the symbolic parameters $\mathbf{w}_{sy}$ as the \ltnshortname{} framework uses standard real-valued logics that typically do not have trainable parameters.

\ltnshortname{} learning is finding the parameters with the highest satisfaction, i.e., learning with the energy loss in the NeSy-EBM framework.
The NeSy-EBM framework connects \ltnshortname{} to the EBM literature, which suggests principled alternative learning algorithms.
Moreover, the NeSy-EBM framework sheds light on design choices for the various components of the \ltnshortname{} to ensure the applicability of first-order methods for learning and desirable scalability and expressiveness properties of inference.

        \section{Joint Reasoning in  NeSy-EBMs}
\label{sec:appendix_joint_reasoning}

This section expands the discussion of joint reasoning in NeSy-EBMs.
To reiterate, we highlight two important categories of NeSy-EBM energy functions: \emph{joint} and \emph{independent}.
Formally, an energy function that is additively separable over the output variables $\mathbf{y}$ is an \textit{independent energy function}, i.e., corresponding to each of the $n_{y}$ components of the output variable $\mathbf{y}$ there exists functions $n_{y}$ functions $E_{1}(\mathbf{y}[1], \mathbf{x}_{sy}, \mathbf{w}_{sy}, \mathbf{g}(\mathbf{x}_{nn}, \mathbf{w}_{nn}))$, $\cdots$, $E_{n_{y}}(\mathbf{y}[n_{y}], \mathbf{x}_{sy}, \mathbf{w}_{sy}, \mathbf{g}(\mathbf{x}_{nn}, \mathbf{w}_{nn}))$ such that 
\begin{align*}
    E(\cdot) = \sum_{i = 1}^{n_{y}} E_{i}(\mathbf{y}[i], \mathbf{x}_{sy}, \mathbf{w}_{sy}, \mathbf{g}(\mathbf{x}_{nn}, \mathbf{w}_{nn})).
\end{align*}
While a function that is not separable over output variables $\mathbf{y}$ is a \textit{joint energy function}.
This categorization allows for an important distinction during inference and learning.
Independent energy functions simplify inference and learning as finding an energy minimizer, $\mathbf{y}^{*}$, can be distributed across the independent functions $E_{i}$.
In other words, the predicted value for a variable $\mathbf{y}[i]$ has no influence over that of $\mathbf{y}[j]$ where $j \neq i$ and can therefore be predicted separately, i.e., independently.
However, independent energy functions cannot leverage some joint information that may be used to improve predictions.

To illustrate, recall the example described in the \textit{Neural Probabilistic Soft Logic} section where a neural network is used to classify the species of an animal in an image with external information.
\figref{fig:joint-non-joint-energy-function} outlines the distinction between independent and joint prediction for this scenario.
In \figref{fig:joint-non-joint-energy-function}(a), the independent setting, the input is a single image, and the energy function is defined over the three possible classes: \pslarg{dog}, \pslarg{cat}, and \pslarg{frog}.
While in \figref{fig:joint-non-joint-energy-function}(b), the joint setting, the input is a pair of images, and the energy function is defined for every possible combination of labels (e.g., (\pslarg{dog}, \pslarg{dog}), (\pslarg{dog}, \pslarg{cat}), etc.).
The joint energy function of (b) leverages external information suggesting the images are of the same entity.
Joint reasoning enables a model to make structured predictions that resolve contradictions an independent model could not detect.

For NeSy-EBMs, a joint energy function encodes dependencies between its output variables through its symbolic potentials.
\shortname{} additionally benefits from scalable convex inference to speed up learning over a dependent set of output variables.
As we see in the \textit{Experimental Evaluation} section, utilizing joint inference and learning in NeSy-EBMs not only provides a boost in performance but produces results that non-joint methods cannot (even with five times the amount of data).

\begin{figure}[t]
    \centering
    \begin{subfigure}[b]{0.50\textwidth}
        \centering
        \includegraphics[width=\textwidth]{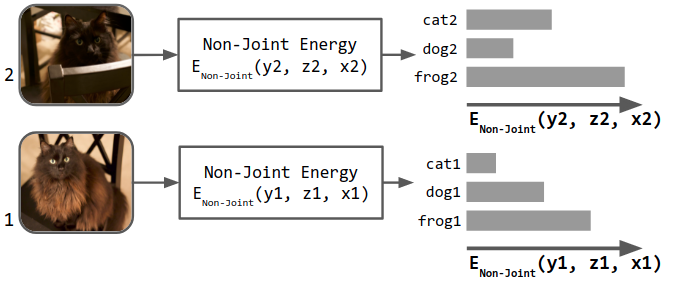}
        \caption{Independent Energy}
    \end{subfigure}
    \begin{subfigure}[b]{0.50\textwidth}
        \centering
        \includegraphics[width=\textwidth]{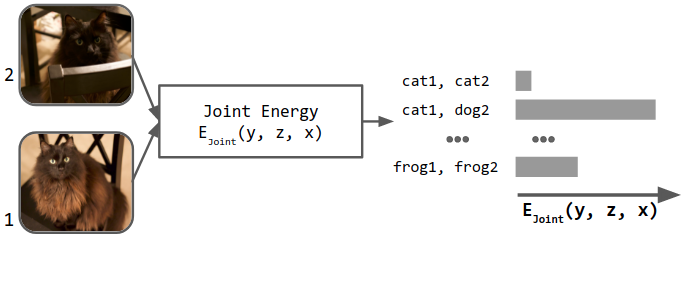}
        \caption{Joint-Energy}
    \end{subfigure}
    \caption{
        Example of non-joint and joint energy functions.
    }
    \label{fig:joint-non-joint-energy-function}
\end{figure}
        \section{\shortname{} Parameter Learning}
\label{sec:appendix_neupsl}

This section details the \shortname{} parameter learning algorithm.
We begin by discussing degenerate solutions to the energy loss problem and techniques for overcoming them.
We then provide the precise parameter updates we use to efficiently fit \shortname{} model parameters while avoiding the discussed degenerate solutions.

\subsection{Energy Loss Degenerate Solutions}
\label{sec:energy_degenerate_solutions}

In this section, we show two degenerate solutions of the energy loss learning problem for \shortname{} and methods for overcoming them.
Recall that the \shortname{} energy loss learning problem is:
{\footnotesize
\begin{align*}
    & \argmin_{(\mathbf{w}_{nn}, \mathbf{w}_{psl}) \in \mathbb{R}^{nn} \times \mathbb{R}^{r}_{+}} \mathcal{L}(\mathbf{w}_{nn}, \mathbf{w}_{psl}, \mathcal{S}) \\
    & = \argmin_{(\mathbf{w}_{nn}, \mathbf{w}_{psl}) \in \mathbb{R}^{nn} \times \mathbb{R}^{r}_{+}} \sum_{i = 1}^{P} E((\mathbf{y}_{i, t}, \mathbf{z}^{*}_{i}), \mathbf{x}_{sy, i}, \mathbf{x}_{nn, i}, \mathbf{w}_{nn}, \mathbf{w}_{psl}) \\
    & = \argmin_{(\mathbf{w}_{nn}, \mathbf{w}_{psl}) \in \mathbb{R}^{nn} \times \mathbb{R}^{r}_{+}} \\
    & \sum_{i = 1}^{P} \min_{\mathbf{z} \vert ((\mathbf{y}_{i, t}, \mathbf{z}), \mathbf{x}) \in \mathbf{\Omega}} \mathbf{w}_{psl}^T \mathbf{\Phi}((\mathbf{y}_{i, t}, \mathbf{z}), \mathbf{x}_{sy, i}, \mathbf{x}_{nn, i}, \mathbf{w}_{nn})
\end{align*}
}%
Note that the symbolic parameters are constrained to be non-negative real numbers.
Furthermore, as every symbolic potential has the form:
{\small
\begin{align*}
    \phi_{i}(\mathbf{y}, \mathbf{x}_{sy}, \mathbf{x}_{nn}, \mathbf{w}_{nn}) = \max(l_{i}(\mathbf{y}, \mathbf{x}_{sy}, \mathbf{g}_{nn}(\mathbf{x}_{nn}, \mathbf{w}_{nn})), 0)^\alpha
\end{align*}
}%
we have that $\phi_{i}(\mathbf{y}, \mathbf{x}_{sy}, \mathbf{x}_{nn}, \mathbf{w}_{nn}) \geq 0$ for all settings of the variables $\mathbf{y}, \mathbf{x}_{sy}, \mathbf{x}_{nn}, \mathbf{w}_{nn}$.
Thus, $\Phi_i(\mathbf{y} , \mathbf{x}_{sy}, \mathbf{x}_{nn}, \mathbf{w}_{nn}) := \sum_{j \in t_i} \phi_{i}(\mathbf{y}, \mathbf{x}_{sy}, \mathbf{x}_{nn}, \mathbf{w}_{nn}) \geq 0$ and  $\mathbf{\Phi(\mathbf{y} , \mathbf{x}_{sy}, \mathbf{x}_{nn}, \mathbf{w}_{nn})} := [\Phi_i(\mathbf{y} , \mathbf{x}_{sy}, \mathbf{x}_{nn}, \mathbf{w}_{nn})]_{i = 1}^{r} \succeq \mathbf{0}$.
Therefore, we have that 
\begin{align*}
    & \mathcal{L}(\mathbf{w}_{nn}, \mathbf{w}_{psl}, \mathcal{S}) = \\
    & \sum_{i = 1}^{P} \min_{\mathbf{z} \vert ((\mathbf{y}_{i, t}, \mathbf{z}), \mathbf{x}_{sy}) \in \mathbf{\Omega}} \mathbf{w}_{psl}^T \mathbf{\Phi}((\mathbf{y}_{i, t}, \mathbf{z}), \mathbf{x}_{sy, i}, \mathbf{x}_{nn, i}, \mathbf{w}_{nn}) \geq 0 
\end{align*}
In fact, $\mathcal{L}(\mathbf{w}_{nn}, \mathbf{w}_{psl}, \mathcal{S}) = 0$ when $\mathbf{w}_{psl} = \mathbf{0}$.
The $\mathbf{0}$ solution to the weight learning problem is degenerate and should be avoided.
Precisely, $\mathbf{w}_{psl} = \mathbf{0}$ results in a \emph{collapsed energy function}: a function that assigns all points $\mathbf{y} \in \mathcal{Y}$ to the same energy.
Collapsed energy functions have no predictive power since inference, i.e., finding a lowest energy state of the variables is trivial and uninformative.
To overcome this degenerate solution a simplex constraint on the symbolic parameters, $\mathbf{w}_{psl} \in \Delta^r := \{\mathbf{w} \in \mathbb{R}^r_+ \big \vert \Vert \mathbf{w} \Vert_{1} = 1\}$, is added, making the degenerate solution $\mathbf{w}_{psl} = \mathbf{0}$ infeasible.
This constraint also ensures the non-negativity of the parameters and does not inhibit the expressivity of \shortname{} when the deep HL-MRF is exclusively used to obtain MAP inference predictions. 
This property of (deep) HL-MRFs was shown by \citenoun{srinivasan:mlj21}, where they proved and leveraged the fact that MAP inference in HL-MRFs is invariant to the scale of the weights.
Formally, for all weight configurations $\mathbf{w}_{psl}$ and scalars $\tilde{c} \in \mathbb{R}_{+}$, 
\begin{align*}
    \argmax_{\mathbf{y} \vert (\mathbf{y}, \mathbf{x}_{sy}) \in \mathbf{\Omega}} & E(\mathbf{y}, \mathbf{x}_{sy}, \mathbf{x}_{nn}, \mathbf{w}_{nn}, \mathbf{w}_{psl}) \nonumber \\
    & = \argmax_{\mathbf{y} \vert (\mathbf{y}, \mathbf{x}_{sy}) \in \mathbf{\Omega}} E(\mathbf{y}, \mathbf{x}_{sy}, \mathbf{x}_{nn}, \mathbf{w}_{nn}, \tilde{c} \cdot \mathbf{w}_{psl})
\end{align*}

The $\mathbf{w}_{psl} = \mathbf{0}$ is infeasible with the simplex constraint; however, an additional degenerate solution arises from its introduction.
This is because the energy loss is concave in the symbolic parameters $\mathbf{w}_{psl}$ for fixed $\mathbf{w}_{nn}$ and $\mathcal{S}$, as is shown in following lemma and its corresponding proof.
Consequently, a solution to the constrained energy loss learning problem must exist at corner points of the simplex.

\begin{lemma}
    The energy loss function
    \begin{align*}
        \mathcal{L}(\mathbf{w}_{nn}, &  \mathbf{w}_{psl}, \mathcal{S}) = \\
        & \sum_{i = 1}^{P} \min_{\mathbf{z} \vert ((\mathbf{y}_{i, t}, \mathbf{z}), \mathbf{x}) \in \mathbf{\Omega}} \mathbf{w}_{psl}^T \mathbf{\Phi}((\mathbf{y}_{i, t}, \mathbf{z}), \mathbf{x}_{sy, i}, \mathbf{x}_{nn, i}, \mathbf{w}_{nn})
    \end{align*}
    is concave in $\mathbf{w}_{psl}$.
\end{lemma}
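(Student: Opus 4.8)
The plan is to exploit the fact that, with $\mathbf{w}_{nn}$ and $\mathcal{S}$ held fixed, the energy $\mathbf{w}_{psl}^T \mathbf{\Phi}((\mathbf{y}_{i,t}, \mathbf{z}), \mathbf{x}_i, \mathbf{x}_{i,nn}, \mathbf{w}_{nn})$ is \emph{linear} in $\mathbf{w}_{psl}$ for every fixed $\mathbf{z}$, and then to invoke the standard convex-analysis fact that a pointwise infimum of affine functions is concave. First I would decompose the objective as a sum over training examples, $\mathcal{L} = \sum_{i=1}^{P} f_i$, where $f_i(\mathbf{w}_{psl}) := \min_{\mathbf{z}} \mathbf{w}_{psl}^T \mathbf{\Phi}((\mathbf{y}_{i,t}, \mathbf{z}), \mathbf{x}_i, \mathbf{x}_{i,nn}, \mathbf{w}_{nn})$, and reduce the claim to showing that each $f_i$ is concave, since concavity is preserved under summation.

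The crucial structural observation is that the feasible region of the inner minimization, $\{\mathbf{z} \mid ((\mathbf{y}_{i,t}, \mathbf{z}), \mathbf{x}_i) \in \mathbf{\Omega}\}$, is cut out by the linear constraints $\mathbf{c}$ defining $\mathbf{\Omega}$, which depend only on $\mathbf{y}$ and $\mathbf{x}$ and in particular are \emph{independent} of $\mathbf{w}_{psl}$. This is precisely what licenses treating $f_i$ as an infimum of a \emph{fixed} family of functions indexed by the feasible $\mathbf{z}$: as $\mathbf{w}_{psl}$ varies the index set does not move, only the affine functions being compared.

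Then I would give the short explicit argument. Fix $\mathbf{w}_1, \mathbf{w}_2 \in \mathbb{R}^r_+$ and $\lambda \in [0,1]$, and let $\mathbf{z}^*$ attain the minimum defining $f_i(\lambda \mathbf{w}_1 + (1-\lambda)\mathbf{w}_2)$ (attainment follows from the strong convexity in $\mathbf{y}$ noted earlier). Writing $\mathbf{\Phi}^* := \mathbf{\Phi}((\mathbf{y}_{i,t}, \mathbf{z}^*), \mathbf{x}_i, \mathbf{x}_{i,nn}, \mathbf{w}_{nn})$ and using that $\mathbf{z}^*$ is merely \emph{one} feasible choice for each of $\mathbf{w}_1$ and $\mathbf{w}_2$, linearity of $\mathbf{w} \mapsto \mathbf{w}^T \mathbf{\Phi}^*$ yields
\[
f_i(\lambda \mathbf{w}_1 + (1-\lambda)\mathbf{w}_2) = \lambda\,\mathbf{w}_1^T \mathbf{\Phi}^* + (1-\lambda)\,\mathbf{w}_2^T \mathbf{\Phi}^* \geq \lambda f_i(\mathbf{w}_1) + (1-\lambda) f_i(\mathbf{w}_2),
\]
which is exactly the concavity inequality for $f_i$. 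Summing over $i$ concludes.

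I do not expect any genuine obstacle here; the argument is essentially a one-liner once the structure is set up. The only two points requiring care are the structural check that the inner feasible set is independent of $\mathbf{w}_{psl}$ (so the infimum is over a fixed index set), and correctly tracking the direction of the inequality: because $\mathbf{z}^*$ is the minimizer for the mixed weight but only a suboptimal feasible point for each individual weight, each term satisfies $\mathbf{w}_k^T \mathbf{\Phi}^* \geq f_i(\mathbf{w}_k)$, producing the $\geq$ needed for concavity rather than the reverse.
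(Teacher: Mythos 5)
Your proposal is correct and follows essentially the same route as the paper: both reduce the claim to the fact that each per-sample term is a pointwise infimum, over a $\mathbf{w}_{psl}$-independent feasible set, of functions that are affine (indeed linear) in $\mathbf{w}_{psl}$, and then sum the resulting concave functions. The only difference is that you unpack the standard ``infimum of affine functions is concave'' lemma with an explicit two-point argument, whereas the paper simply cites it from convex analysis.
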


\begin{proof}
    For all $i$
    \begin{align*}
        E((\mathbf{y}_{i, t}, & \mathbf{z}^{*}_{i}), \mathbf{x}_{sy, i}, \mathbf{x}_{nn, i}, \mathbf{w}_{nn}, \mathbf{w}_{psl}) = \\
        & \inf_{\mathbf{z} \vert ((\mathbf{y}_{i, t}, \mathbf{z}), \mathbf{x}_{sy}) \in \mathbf{\Omega}} \mathbf{w}_{psl}^T \Phi((\mathbf{y}_{i, t}, \mathbf{z}), \mathbf{x}_{sy, i}, \mathbf{x}_{nn, i}, \mathbf{w}_{nn}) 
    \end{align*}
    is a pointwise infimum of a set of affine, hence concave, functions of $\mathbf{w}_{psl}$ and is therefore concave \citep{boyd:book04}.
    Therefore, 
    \begin{align}
        \mathcal{L}(\mathbf{w}_{nn}, \mathbf{w}_{psl}, \mathcal{S}) 
        = &\sum_{i = 1}^{P} E((\mathbf{y}_{i, t}, \mathbf{z}^{*}_{i}), \mathbf{x}_{sy, i}, \mathbf{x}_{nn, i}, \mathbf{w}_{nn}, \mathbf{w}_{psl})
    \end{align}
    is a sum of concave functions of $\mathbf{w}_{psl}$ and is concave.
\end{proof}

Additionally, note that the unit simplex, $\Delta^r$, is a convex set, and, more precisely, a polyhedron.
Following from its definition, a concave function is minimized over a polyhedron at one of the vertices.
This solution is undesirable for the energy minimization problem because each symbolic relation corresponding to the parameters should have an influence over the model predictions.
For this reason, we propose using a negative logarithm as a parameter regularizer, giving the simplex corner solutions infinitely high energy.
With negative log regularization and simplex constraints, energy loss symbolic parameter learning is:
\begin{align}
    & \min_{\mathbf{w}_{nn} \in \mathcal{W}_{nn}, \mathbf{w}_{psl} \in \Delta^r}  \mathcal{L}(\mathbf{w}_{nn}, \mathbf{w}_{psl}, \mathcal{S}) - \sum_{i=1}^{r} \log_{b}(\mathbf{w}_{psl}[i])
\end{align}

\subsection{Exponentiated Gradient Descent}
\label{sec:exponentiated_gradient}

As suggested by \citenoun{dickens:tpm22}, we minimize the energy loss with respect to the symbolic parameters constrained to the unit simplex via normalized exponentiated gradient descent \citep{kivinen:ic97, shalevshwartz:ftml11}.
Then, minimization over neural parameters is performed with standard gradient descent.
With an initial step size parameter $\eta > 0$, the parameter updates are
{\small
\begin{align*}
    \mathbf{w}_{nn}^{k + 1} &= \mathbf{w}_{nn}^{k} + \eta \nabla_{w_{nn}} \mathcal{L}(\mathbf{w}_{nn}^{k}, \mathbf{w}_{psl}^{k}, S) \\
    \mathbf{w}_{psl}^{k + 1}[i] &= \frac{\mathbf{w}_{psl}^{k}[i] \exp\{- \eta \frac{\partial  \mathcal{L}(\mathbf{w}_{nn}^{k}, \mathbf{w}_{psl}^{k}, S)}{\partial \mathbf{w}_{psl}^{k}[i]}\}}
    {\sum_{j = 1}^{r} \exp\{-\frac{\partial  \mathcal{L}(\mathbf{w}_{nn}^{k}, \mathbf{w}_{psl}^{k}, S)}{\partial \mathbf{w}_{psl}^{k}[j]}\}} \, , \quad \forall i = 1, \cdots, r\\
\end{align*}
}%
With this update, the symbolic parameter $\mathbf{w}_{psl}$ is guaranteed to satisfy the simplex constraints.

        \section{Dataset Details}
\label{sec:appendix_datasets}

In this section, we provide additional information on the \additionlongname{} and \sudokulongname{} datasets.
Both datasets are generated from the original MNIST image classification dataset introduced by \citenoun{lecun:ieee98}. 
Each MNIST image is a 28x28 matrix consisting of pixel grayscale values normalized to lie in the range $[0,1]$.

\subsection{\additionlongname{}}

The \additionlongname{} task, originally proposed by \citenoun{manhaeve:neurips18}, constructs addition equations using MNIST images with only their summation as a label.
As shown in \figref{fig:mnist-addition}, equations consist of two numbers each comprised of $k$ MNIST images, i.e., \additionlongname{}1 consists of two numbers with one image each ($k=1$) and \additionlongname{}2 consists of two numbers with two images each ($k=2$).
Given two numbers ($2*k$ images), the classification task is to predict the sum.

\textbf{Generation} Addition examples are created by shuffling a list of MNIST images and then partitioning, in order, pairs of numbers.
For example, let the corresponding list of MNIST images be $[\inlinegraphics{images/MNIST-0.png}, \inlinegraphics{images/MNIST-3.png}, \inlinegraphics{images/MNIST-4.png}, \inlinegraphics{images/MNIST-5.png}, \inlinegraphics{images/MNIST-7.png}, \inlinegraphics{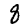}]$.
First this list is shuffled, $[\inlinegraphics{images/MNIST-8.png}, \inlinegraphics{images/MNIST-3.png}, \inlinegraphics{images/MNIST-0.png}, \inlinegraphics{images/MNIST-7.png}, \inlinegraphics{images/MNIST-5.png}, \inlinegraphics{images/MNIST-4.png}]$, and then partitioned into $2*k$ tuples in order.
In this scenario, \additionlongname{}1 creates $3$ addition examples, $\Big[[\inlinegraphics{images/MNIST-8.png}, \inlinegraphics{images/MNIST-3.png}], [\inlinegraphics{images/MNIST-0.png}, \inlinegraphics{images/MNIST-7.png}], [\inlinegraphics{images/MNIST-5.png}, \inlinegraphics{images/MNIST-4.png}]\Big]$.

\begin{figure}[t]
    \centering
    \begin{subfigure}[b]{0.35\textwidth}
        \centering
        \includegraphics[width=\textwidth]{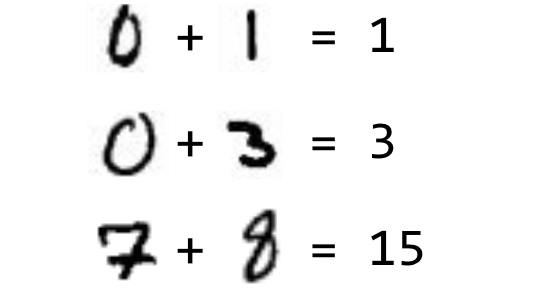}
        \caption{\additionlongname{}1}
    \end{subfigure}
    \begin{subfigure}[b]{0.35\textwidth}
        \centering
        \includegraphics[width=\textwidth]{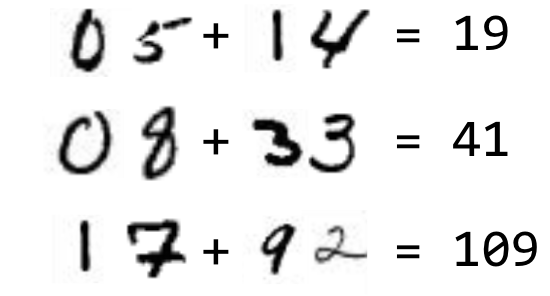}
        \caption{\additionlongname{}2}
    \end{subfigure}
    \caption{
        Example of \additionlongname{}1 and \additionlongname{}2.
    }
    \label{fig:mnist-addition}
\end{figure}

\textbf{Overlap} The process for generating addition examples for overlap variations is the same, but the list of MNIST images contains duplicates.
Specifically, a list of $n \in \{40, 60, 80\}$ unique MNIST images are randomly selected without replacement from the original MNIST train split.
Then, a list of $m \in \{0, n/2, n\}$ images are randomly selected with replacement from these $n$ images.
These two lists are combined to create a final list of MNIST images ($n + m$ images).
This list is used to generate \additionlongname{} examples using the process described above.
This process is then repeated to generate a validation set and then repeated again to generate the test set.
The MNIST images in the test set are pulled from the original MNIST test split to avoid leaking data and $n = 1000$.

\subsection{\sudokulongname{}}

Inspired by the Visual Sudoku problem proposed by \citenoun{wang:icml19}, \citenoun{augustine:nesy22} introduced a novel NeSy task, \textbf{\sudokulongname{}}.
In this task, 4x4 Sudoku puzzles are constructed using unlabeled MNIST images, e.g., \figref{fig:vs-example}.
The model must identify whether a puzzle is correct, i.e., no duplicate digits in any row, column, or square.

\textbf{Generation} Puzzles are created from a list of MNIST images, where this list has an equal representation of each class (e.g., zeroes, ones, twos, and threes).
To create a "correct" puzzle, four images of each class are randomly selected without replacement from this list and arranged in a layout that adheres to the traditional sudoku puzzle rules.
This layout is randomly chosen from all possible correct solutions.
The first image represents the top-left corner, and the final image represents the bottom-right corner of the puzzle.
For example, \figref{fig:vs-example} would be $\{ 1, 2, 4, 3, 4, 3, 1, 2, 2, 4, 3, 1, 3, 1, 2, 4 \}$.

\begin{figure}[t]
    \centering
    \includegraphics[width=0.40 \textwidth]{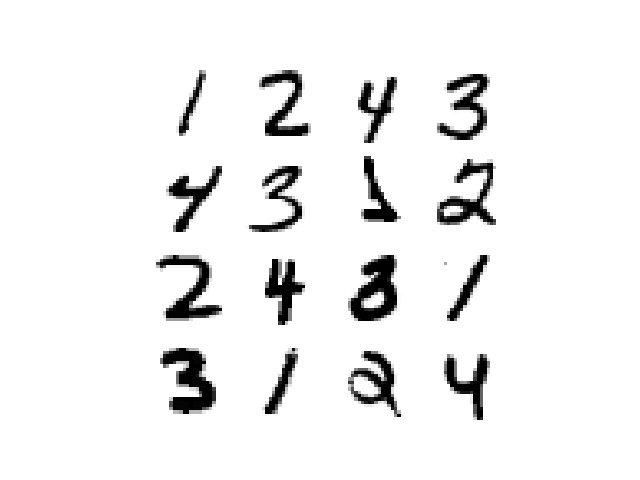}
    \caption{An example of a valid \sudokulongname{} puzzle.}
    \label{fig:vs-example}
\end{figure}

In addition to generating correctly solved Sudoku puzzles, incorrect puzzles are generated.
Instead of randomly creating puzzles and checking if they are correct,
we begin with the correct puzzles and corrupt them.
In this way, we hope to create puzzles that are more subtle and 
closer to the incorrect puzzles that a human may create,
as opposed to randomly generated puzzles that may be obviously incorrect.

The corruptions are done in one of two ways: \textit{replacement} or \textit{substitution}.
A replacement corruption chooses a random cell and replaces it with a random image of another class.
Replacement images are chosen uniformly from the same split.
A substitution corruption randomly chooses two cells in the same puzzle and swaps them.

Each correct puzzle has one corrupted puzzle made from it, resulting in a balanced dataset.
A fair coin is flipped for each puzzle to decide which corruption method will be used.
After each corruption is made, a fair coin is flipped to see if the process continues.
After the complete corruption process, the puzzle is checked to ensure it is not a valid Sudoku puzzle.
If the puzzle is invalid, it is added to the split.
Otherwise, the process is repeated using the same correct puzzle.

\textbf{Overlap} The process for generating puzzle examples for overlap variations is the same, but the list of MNIST images contains duplicates.
Specifically, a list of $n \in \{64, 128, 256\}$ unique MNIST images are randomly selected without replacement from the original MNIST train split, with an equal representation of four classes (zeros, ones, twos, and threes).
Then, a list of $m \in \{0, n, 3.0 \cdot n\}$ images are randomly selected with replacement from these $n$ images, where there is an equal representation of the four class.
These two lists are combined to create a final list of MNIST images ($n + m$ images).
This list is used to generate puzzles using the process described above.
This process is then repeated to generate a validation set and then repeated again to generate the test set.
The MNIST images in the test set are pulled from the original MNIST test split to avoid leaking data and $n = 1000$.

\begin{figure*}[t] 
    \centering
    \noindent\fbox{%
        \begin{minipage}{0.99\hsize}
            \begin{scriptsize}
            \begin{flalign*}
                \hspace{0.3cm} & \textit{\# Digit Sums}&& \\
                & w_{1}: \pslpred{Neural}(\pslarg{Img1}, \pslarg{X}) \psland \pslpred{Neural}(\pslarg{Img2}, \pslarg{Y}) \psland \pslpred{DigitSum}(\pslarg{X}, \pslarg{Y}, \pslarg{Z}) \pslthen \pslpred{Sum}(\pslarg{Img1}, \pslarg{Img2}, \pslarg{Z}) \\
                & w_{2}: \pslneg \pslpred{Neural}(\pslarg{Img1}, \pslarg{X}) \psland \pslpred{Neural}(\pslarg{Img2}, \pslarg{Y}) \psland \pslpred{DigitSum}(\pslarg{X}, \pslarg{Y}, \pslarg{Z}) \pslthen \pslneg \pslpred{Sum}(\pslarg{Img1}, \pslarg{Img2}, \pslarg{Z}) \\
                & w_{3}: \pslpred{Neural}(\pslarg{Img1}, \pslarg{X}) \psland \pslneg \pslpred{Neural}(\pslarg{Img2}, \pslarg{Y}) \psland \pslpred{DigitSum}(\pslarg{X}, \pslarg{Y}, \pslarg{Z}) \pslthen \pslneg \pslpred{Sum}(\pslarg{Img1}, \pslarg{Img2}, \pslarg{Z}) \\[0.3cm]
                & \textit{\# Digit Constraints} \\
                & w_{4}: \pslpred{Neural}(\pslarg{Img1}, +\pslarg{X}) >= \pslpred{Sum}(\pslarg{Img1}, \pslarg{Img2}, \pslarg{Z}) \{\pslarg{X}: \pslpred{PossibleDigits}(\pslarg{X}, \pslarg{Z})\} \\
                & w_{5}: \pslpred{Neural}(\pslarg{Img2}, +\pslarg{X}) >= \pslpred{Sum}(\pslarg{Img1}, \pslarg{Img2}, \pslarg{Z}) \{\pslarg{X}: \pslpred{PossibleDigits}(\pslarg{X}, \pslarg{Z})\} \\[0.3cm]
                & \textit{\# Simplex Constraints} \\
                & \pslpred{Sum}(\pslarg{Img1}, \pslarg{Img2}, +\pslarg{Z}) = 1 . \\
                &
            \end{flalign*}
            \end{scriptsize}
        \end{minipage}
    }
    \caption{\shortname{} \additionlongname{}1 Symbolic Model}
    \label{fig:mnist-addition1-symbolic-model}
\end{figure*}
        \begin{table}[t]
    \centering
    \scriptsize
    \begin{tabular}{cccc}
        \toprule
            Order & Layer       & Parameter       & Value   \\
        \toprule
            \multirow{2}{*}{1} & \multirow{2}{*}{Convolutional} & Kernel Size     & 5       \\
              &                 & Output Channels & 6       \\
        \cmidrule{2-4}
            \multirow{3}{*}{2} & \multirow{3}{*}{Max Pooling}     & Pooling Width   & 2       \\
              &                 & Pooling Height  & 2       \\
              &                 & Activation      & ReLU    \\
        \cmidrule{2-4}
            \multirow{2}{*}{3} & \multirow{2}{*}{Convolutional}   & Kernel Size     & 5       \\
              &                 & Output Channels & 16      \\
        \cmidrule{2-4}
            \multirow{3}{*}{4} & \multirow{3}{*}{Max Pooling}     & Pooling Width   & 2       \\
              &                 & Pooling Height  & 2       \\
              &                 & Activation      & ReLU    \\
        \cmidrule{2-4}
            \multirow{3}{*}{5} & \multirow{3}{*}{Fully Connected} & Input Shape     & 256     \\
              &                 & Output Shape    & 120     \\
              &                 & Activation      & ReLU    \\
        \cmidrule{2-4}
            \multirow{3}{*}{6} & \multirow{3}{*}{Fully Connected} & Input Shape     & 120     \\
              &                 & Output Shape    & 84      \\
              &                 & Activation      & ReLU    \\
        \cmidrule{2-4}
            \multirow{3}{*}{7} & \multirow{3}{*}{Fully Connected} & Input Shape     & 84      \\
              &                 & Output Shape    & 10      \\
              &                 & Activation      & Softmax \\
        \bottomrule
    \end{tabular}

    \caption{
        Neural architecture used in \shortname{} for both \additionlongname{} and \sudokulongname{} experiments.
    }

    \label{tab:mnist-nn-arch}
\end{table}

\section{\shortname \ Models}
\label{sec:appendix_psl_models}
This section provides an overview of the \shortname{} models used in the \textit{Experimental Evaluation}.
The subsequent subsections will examine the symbolic model, neural model, and hyperparameters employed for each setting.

\subsection{\additionlongname{}1}

The \shortname{} model for the \additionlongname{}1 experiment integrates the neural model summarized in \tabref{tab:mnist-nn-arch} with the symbolic model depicted in \figref{fig:mnist-addition1-symbolic-model}.
The symbolic model contains the following predicates:

\begin{itemize}
    \item \textbf{$\pslpred{Neural}(\pslarg{Img}, \pslarg{X})$}
    The $ \pslpred{Neural} $ predicate is the class probability for each image as inferred by the neural network.
    $ \pslarg{Img} $ is MNIST image identifier and $\pslarg{X}$ is a digit class that the image may represent.

    \item \textbf{$ \pslpred{DigitSum}(\pslarg{X}, \pslarg{Y}, \pslarg{Z})$}
    The $\pslpred{DigitSum}$ predicate determines if two digits ($\pslarg{X}$ and $\pslarg{Y}$) sum to a number ($\pslarg{Z}$).
    For example, $\pslpred{DigitSum}(4, 5, 9)$ would return $1$ as $4$ added to $5$ is $9$. Conversely, $\pslpred{DigitSum}(2, 2, 5)$ would return $0$ as $2$ added to $2$ is not $5$.

    \item \textbf{$\pslpred{Sum}(\pslarg{Img1}, \pslarg{Img2}, \pslarg{Z})$}
    The $ \pslpred{Sum} $ predicate is the probability that the digits represented in the images identified by arguments $\pslarg{Img1}$ and $\pslarg{Img2}$ add up to the number identified by the argument $\pslarg{Z}$.
    This predicate instantiates decision variables, i.e., variables from this predicate are not fixed during inference and learning as described in the NeSy EBM, \shortname, and Inference and Learning sections.

    \item \textbf{$\pslpred{PossibleDigits}(\pslarg{X}, \pslarg{Z})$}
    The $ \pslpred{PossibleDigits} $ predicate determines if a digit ($\pslarg{X}$) can be included in a sum that equals a number ($\pslarg{Z}$).
    For example, $\pslpred{PossibleDigits}(9, 0)$ would return $0$ as no positive digit when added to $9$ will equal $0$. Conversely, $\pslpred{PossibleDigits}(9, 17)$ would return $1$ as $8$ added to $9$ equals $17$.
\end{itemize}

The \textit{Digit Sums} rules represents the summation of the two images $\pslarg{Img1}$ and $\pslarg{Img2}$, i.e., if the neural model labels the image id $\pslarg{Img1}$ as digit $\pslarg{X}$ and $\pslarg{Img2}$ as $\pslarg{Y}$ and the digits $\pslarg{X}$ and $\pslarg{Y}$ sum to $\pslarg{Z}$ then the sum of the images must be $\pslarg{Z}$. 

The \textit{Digit Constraints} rules restrict the possible values of the $\pslpred{Sum}$ predicate based on the neural model's prediction. 
For instance, if the neural model predicts that the digit label for image $\pslarg{Img1}$ is 1, then the sum that $\pslarg{Img1}$ is involved in cannot be any less than $1$ or greater than $10$.

\begin{table}[t]
    \centering
    \small

    \begin{tabular}{lll}
        \toprule
            Hyperparameter                    & Tuning Range    & Final Value \\
         \midrule
            Neural Learning Rate              & \{1e-2, 1e-3, 1e-4\}       & 1e-3 \\
            ADMM Max Iterations               & \{50, 100, 500, 1000\}     & 500 \\
        \bottomrule
    \end{tabular}
    
    \caption{
        \shortname{} hyperparameters for the \additionlongname{}1 experiment.
    }
    \label{tab:mnistadd1-hyperparameters}
\end{table}

\textbf{Hyperparameters}  \tabref{tab:mnistadd1-hyperparameters} presents the hyperparameter values and tuning ranges for the \shortname{} \additionlongname{}1 models.
The hyperparameter search was conducted on a single split generated from a list of 600 MNIST images, with the best parameters applied to all data settings.
Any unspecified values were left at their default settings.
The \textit{ADMM Max Iterations} parameter refers to the number of ADMM iterations conducted between each step of gradient descent during the learning process.
The \textit{Neural Learning Rate} parameter refers to the learning rate of the neural model used to predict image labels.

\begin{figure*}[htbptpb] 
    \centering
    \noindent\fbox{%
        \begin{minipage}{0.99\hsize}
            \begin{scriptsize}
            \begin{flalign*}
                \hspace{0.2cm} & \textit{\# Tens Digit Sums}&& \\
                & w_{1}: \pslpred{Neural}(\pslarg{Img1}, \pslarg{X}) \psland \pslpred{Neural}(\pslarg{Img3}, \pslarg{Y}) \psland \pslpred{DigitSum}(\pslarg{X}, \pslarg{Y}, \pslarg{Z}) \pslthen \pslpred{ImageDigitSum}(\pslarg{Img1}, \pslarg{Img3}, \pslarg{Z}) \\
                & w_{2}: \pslneg \pslpred{Neural}(\pslarg{Img1}, \pslarg{X}) \psland \pslpred{Neural}(\pslarg{Img3}, \pslarg{Y}) \psland \pslpred{DigitSum}(\pslarg{X}, \pslarg{Y}, \pslarg{Z}) \pslthen \pslneg \pslpred{ImageDigitSum}(\pslarg{Img1}, \pslarg{Img3}, \pslarg{Z}) \\
                & w_{3}: \pslpred{Neural}(\pslarg{Img1}, \pslarg{X}) \psland \pslneg \pslpred{Neural}(\pslarg{Img3}, \pslarg{Y}) \psland \pslpred{DigitSum}(\pslarg{X}, \pslarg{Y}, \pslarg{Z}) \pslthen \pslneg \pslpred{ImageDigitSum}(\pslarg{Img1}, \pslarg{Img3}, \pslarg{Z}) \\[0.3cm]
                \hspace{0.2cm} & \textit{\# Ones Digit Sums}&& \\
                & w_{4}: \pslpred{Neural}(\pslarg{Img2}, \pslarg{X}) \psland \pslpred{Neural}(\pslarg{Img4}, \pslarg{Y}) \psland \pslpred{DigitSum}(\pslarg{X}, \pslarg{Y}, \pslarg{Z}) \pslthen \pslpred{ImageDigitSum}(\pslarg{Img2}, \pslarg{Img4}, \pslarg{Z}) \\
                & w_{5}: \pslneg \pslpred{Neural}(\pslarg{Img2}, \pslarg{X}) \psland \pslpred{Neural}(\pslarg{Img4}, \pslarg{Y}) \psland \pslpred{DigitSum}(\pslarg{X}, \pslarg{Y}, \pslarg{Z}) \pslthen \pslneg \pslpred{ImageDigitSum}(\pslarg{Img2}, \pslarg{Img4}, \pslarg{Z}) \\
                & w_{6}: \pslpred{Neural}(\pslarg{Img2}, \pslarg{X}) \psland \pslneg \pslpred{Neural}(\pslarg{Img4}, \pslarg{Y}) \psland \pslpred{DigitSum}(\pslarg{X}, \pslarg{Y}, \pslarg{Z}) \pslthen \pslneg \pslpred{ImageDigitSum}(\pslarg{Img2}, \pslarg{Img4}, \pslarg{Z}) \\[0.3cm]
                & \textit{\# Place Digit Sums} \\
                & \pslpred{ImageDigitSum}(\pslarg{Img1}, \pslarg{Img3}, \pslarg{Z10}) \psland \pslpred{ImageDigitSum}(\pslarg{Img2}, \pslarg{Img4}, \pslarg{Z1}) \psland \pslpred{PlaceNumberSum}(\pslarg{Z10}, \pslarg{Z1}, \pslarg{Z}) \\
                & \hspace{2cm} \pslthen \pslpred{Sum}(\pslarg{Img1}, \pslarg{Img2}, \pslarg{Img3}, \pslarg{Img4}, \pslarg{Z}) \\
                & \pslneg \pslpred{ImageDigitSum}(\pslarg{Img1}, \pslarg{Img3}, \pslarg{Z10}) \psland \pslpred{ImageDigitSum}(\pslarg{Img2}, \pslarg{Img4}, \pslarg{Z1}) \psland \pslpred{PlaceNumberSum}(\pslarg{Z10}, \pslarg{Z1}, \pslarg{Z}) \\
                & \hspace{2cm} \pslthen \pslneg \pslpred{Sum}(\pslarg{Img1}, \pslarg{Img2}, \pslarg{Img3}, \pslarg{Img4}, \pslarg{Z}) \\
                & \pslpred{ImageDigitSum}(\pslarg{Img1}, \pslarg{Img3}, \pslarg{Z10}) \psland \pslneg \pslpred{ImageDigitSum}(\pslarg{Img2}, \pslarg{Img4}, \pslarg{Z1}) \psland \pslpred{PlaceNumberSum}(\pslarg{Z10}, \pslarg{Z1}, \pslarg{Z}) \\
                & \hspace{2cm} \pslthen \pslneg \pslpred{Sum}(\pslarg{Img1}, \pslarg{Img2}, \pslarg{Img3}, \pslarg{Img4}, \pslarg{Z}) \\[0.3cm]
                & \textit{\# Tens Digit Constraints} \\
                & w_{7}: \pslpred{Neural}(\pslarg{Img1}, +\pslarg{X}) >= \pslpred{Sum}(\pslarg{Img1}, \pslarg{Img2}, \pslarg{Img3}, \pslarg{Img4}, \pslarg{Z}) \{\pslarg{X}: \pslpred{PossibleTensDigits}(\pslarg{X}, \pslarg{Z})\} \\
                & w_{8}: \pslpred{Neural}(\pslarg{Img3}, +\pslarg{X}) >= \pslpred{Sum}(\pslarg{Img1}, \pslarg{Img2}, \pslarg{Img3}, \pslarg{Img4}, \pslarg{Z}) \{\pslarg{X}: \pslpred{PossibleTensDigits}(\pslarg{X}, \pslarg{Z})\} \\[0.3cm]
                & \textit{\# Ones Digit Constraints} \\
                & w_{9}: \pslpred{Neural}(\pslarg{Img2}, +\pslarg{X}) >= \pslpred{Sum}(\pslarg{Img1}, \pslarg{Img2}, \pslarg{Img3}, \pslarg{Img4}, \pslarg{Z}) \{\pslarg{X}: \pslpred{PossibleOnesDigits}(\pslarg{X}, \pslarg{Z})\} \\
                & w_{10}: \pslpred{Neural}(\pslarg{Img4}, +\pslarg{X}) >= \pslpred{Sum}(\pslarg{Img1}, \pslarg{Img2}, \pslarg{Img3}, \pslarg{Img4}, \pslarg{Z}) \{\pslarg{X}: \pslpred{PossibleOnesDigits}(\pslarg{X}, \pslarg{Z})\} \\[0.3cm]
                & \textit{\# Digit Sum Constraints} \\
                & w_{11}: \pslpred{Neural}(\pslarg{Img1}, +\pslarg{X}) >= \pslpred{ImageDigitSum}(\pslarg{Img1},\pslarg{Img3}, \pslarg{Z}) \{\pslarg{X}: \pslpred{PossibleDigits}(\pslarg{X}, \pslarg{Z})\} \\
                & w_{12}: \pslpred{Neural}(\pslarg{Img3}, +\pslarg{X}) >= \pslpred{ImageDigitSum}(\pslarg{Img1}, \pslarg{Img3}, \pslarg{Z}) \{\pslarg{X}: \pslpred{PossibleDigits}(\pslarg{X}, \pslarg{Z})\} \\
                & w_{13}: \pslpred{Neural}(\pslarg{Img2}, +\pslarg{X}) >= \pslpred{ImageDigitSum}(\pslarg{Img2}, \pslarg{Img4}, \pslarg{Z}) \{\pslarg{X}: \pslpred{PossibleDigits}(\pslarg{X}, \pslarg{Z})\} \\
                & w_{14}: \pslpred{Neural}(\pslarg{Img4}, +\pslarg{X}) >= \pslpred{ImageDigitSum}(\pslarg{Img2}, \pslarg{Img4}, \pslarg{Z}) \{\pslarg{X}: \pslpred{PossibleDigits}(\pslarg{X}, \pslarg{Z})\} \\[0.3cm]
                & \textit{\# Number Sum Constraints} \\
                & \pslpred{ImageDigitSum}(\pslarg{Img1}, \pslarg{Img3}, +\pslarg{X}) >= \pslpred{Sum}(\pslarg{Img1}, \pslarg{Img2}, \pslarg{Img3}, \pslarg{Img4}, \pslarg{Z}) \{\pslarg{X}: \pslpred{PossibleTensSums}(\pslarg{X}, \pslarg{Z})\} \\
                & \pslpred{ImageDigitSum}(\pslarg{Img2}, \pslarg{Img4}, +\pslarg{X}) >= \pslpred{Sum}(\pslarg{Img1}, \pslarg{Img2}, \pslarg{Img3}, \pslarg{Img4}, \pslarg{Z}) \{\pslarg{X}: \pslpred{PossibleOnesSums}(\pslarg{X}, \pslarg{Z})\} \\[0.3cm]
                & \textit{\# Simplex Constraints} \\
                & \pslpred{Sum}(\pslarg{Img1}, \pslarg{Img2}, \pslarg{Img3}, \pslarg{Img4}, +\pslarg{X}) = 1 . \\
                & \pslpred{ImageDigitSum}(\pslarg{Img1}, \pslarg{Img2}, +\pslarg{X}) = 1 .
            \end{flalign*}
            \end{scriptsize}
        \end{minipage}
    }
    \caption{\shortname{} \additionlongname{}2 Symbolic Model}
    \label{fig:mnist-addition2-symbolic-model}
\end{figure*}

\begin{figure*}[ht]
    \centering
    \noindent\fbox{%
        \begin{minipage}{0.99\hsize}
            \begin{scriptsize}
            \begin{flalign*}
                \hspace{0.2cm} & \textit{\# Row Constraint} && \\
                & \pslpred{Neural}(\pslarg{Puzzle}, +\pslarg{X}, \pslarg{Y}, \pslarg{Number}) = 1 . \\[0.3cm]
                & \textit{\# Column Constraint} \\
                & \pslpred{Neural}(\pslarg{Puzzle}, \pslarg{X}, +\pslarg{Y}, \pslarg{Number}) = 1 . \\[0.3cm]
                & \textit{\# Block Constraint} \\
                & \pslpred{Neural}(\pslarg{Puzzle}, ``0", ``0", \pslarg{Number}) + \pslpred{Neural}(\pslarg{Puzzle}, ``0", ``1", \pslarg{Number}) \\
                    & \hspace{2cm} + \pslpred{Neural}(\pslarg{Puzzle}, ``1", ``0", \pslarg{Number}) + \pslpred{Neural}(\pslarg{Puzzle}, ``1", ``1", \pslarg{Number}) = 1 . \\
                & \pslpred{Neural}(\pslarg{Puzzle}, ``2", ``0", \pslarg{Number}) + \pslpred{Neural}(\pslarg{Puzzle}, ``2", ``1", \pslarg{Number}) \\
                    & \hspace{2cm} + \pslpred{Neural}(\pslarg{Puzzle}, ``3", ``0", \pslarg{Number}) + \pslpred{Neural}(\pslarg{Puzzle}, ``3", ``1", \pslarg{Number}) = 1 . \\
                & \pslpred{Neural}(\pslarg{Puzzle}, ``0", ``2", \pslarg{Number}) + \pslpred{Neural}(\pslarg{Puzzle}, ``0", ``3", \pslarg{Number}) \\
                    & \hspace{2cm} + \pslpred{Neural}(\pslarg{Puzzle}, ``1", ``2", \pslarg{Number}) + \pslpred{Neural}(\pslarg{Puzzle}, ``1", ``3", \pslarg{Number}) = 1 . \\
                & \pslpred{Neural}(\pslarg{Puzzle}, ``2", ``2", \pslarg{Number}) + \pslpred{Neural}(\pslarg{Puzzle}, ``2", ``3", \pslarg{Number}) \\
                    & \hspace{2cm} + \pslpred{Neural}(\pslarg{Puzzle}, ``3", ``2", \pslarg{Number}) + \pslpred{Neural}(\pslarg{Puzzle}, ``3", ``3", \pslarg{Number}) = 1 . \\[0.3cm]
                & \textit{\# Pin First Column} \\
                & w_{2}: \pslpred{FirstPuzzle}(\pslarg{Puzzle}, \pslarg{X}, \pslarg{Y}) - \pslpred{Neural}(\pslarg{Puzzle}, \pslarg{X}, \pslarg{Y}) = 0.0
            \end{flalign*}
            \end{scriptsize}
        \end{minipage}
    }
    \caption{\shortname{} \sudokulongname{} Symbolic Model}
    \label{fig:visual-sudoku-symbolic-model}
\end{figure*}

\subsection{\additionlongname{}2}
The \shortname{} model for the \additionlongname{}2 experiment integrates the neural model summarized in \tabref{tab:mnist-nn-arch} with the symbolic model depicted in \figref{fig:mnist-addition2-symbolic-model}.
The symbolic model contains the following predicates:

\begin{itemize}
    \item \textbf{$\pslpred{Neural}(\pslarg{Img}, \pslarg{X})$}
    The $ \pslpred{Neural} $ predicate is the class probability for each image as inferred by the neural network.
    $ \pslarg{Img} $ is MNIST image identifier and $\pslarg{X}$ is a digit class that the image may represent.

    \item \textbf{$ \pslpred{DigitSum}(\pslarg{X}, \pslarg{Y}, \pslarg{Z})$}
    The $\pslpred{DigitSum}$ predicate determines if two digits ($\pslarg{X}$ and $\pslarg{Y}$) sum to a number ($\pslarg{Z}$).
    For example, $\pslpred{DigitSum}(4, 5, 9)$ would return $1$ as $4$ added to $5$ is $9$. Conversely, $\pslpred{DigitSum}(2, 2, 5)$ would return $0$ as $2$ added to $2$ is not $5$.

    \item \textbf{$\pslpred{Sum}(\pslarg{Img1}, \pslarg{Img2}, \pslarg{Img3}, \pslarg{Img4}, \pslarg{Z})$}
    The $ \pslpred{Sum} $ predicate is the probability that the numbers represented in the images identified by arguments $(\pslarg{Img1}, \pslarg{Img2})$ and $(\pslarg{Img3}, \pslarg{Img4})$ add up to the number identified by the argument $\pslarg{Z}$.
    This predicate instantiates decision variables, i.e., variables from this predicate are not fixed during inference and learning as described in the NeSy EBM, \shortname, and Inference and Learning sections.

    \item \textbf{$\pslpred{PossibleTenDigits}(\pslarg{X}, \pslarg{Z})$} $ \pslpred{PossibleTenDigits} $ takes a $0$ or $1$ value representing whether the digit identified by the argument $\pslarg{X}$ is possible when it is in the tens place of a number involved in a sum that totals to the number identified by the argument $\pslarg{Z}$.
    For instance $\pslpred{PossibleTenDigits}(9, 70) = 0$ as no positive number added to a number with a $9$ in the tens place, e.g., $92$, equals $70$, while $\pslpred{PossibleTenDigits}(9, 170) = 1$ as $78$ added to $92$ is $170$.

    \item \textbf{$\pslpred{PossibleOnesDigits}(\pslarg{X}, \pslarg{Z})$}$ \pslpred{PossibleOnesDigits} $ takes a $0$ or $1$ value representing whether the digit identified by the argument $\pslarg{X}$ is possible when it is in the ones place of a number involved in a sum that totals to the number identified by the argument $\pslarg{Z}$.
    For instance $\pslpred{PossibleOnesDigits}(9, 7) = 0$ as no positive number added to a number with a $9$ in the ones place, e.g., $9$, equals $7$ while $\pslpred{PossibleOnesDigits}(9, 170) = 1$ as $71$ added to $99$ is $170$.

    \item \textbf{$\pslpred{ImageDigitSum}(\pslarg{Img1}, \pslarg{Img2}, \pslarg{Z})$}
    The $ \pslpred{ImageDigitSum} $ predicate is the probability that the digits represented in the images specified by $\pslarg{Img1}$ and $\pslarg{Img2}$ will sum up to the number indicated by the argument $\pslarg{Z}$. These variables are considered latent in the NeuPSL model as there are no truth labels for sums of images in the ones or tens places.

    \item \textbf{$\pslpred{PlaceNumberSum}(\pslarg{Z10}, \pslarg{Z1}, \pslarg{Z})$}
    The $ \pslpred{PlaceNumberSum} $ predicate takes a $0$ or $1$ value representing whether the sum of the numbers $\pslarg{Z10}$ and $\pslarg{Z1}$, where $\pslarg{Z10}$ is the sum of digits in the tens place and $\pslarg{Z1}$ is the sum of digits in the one place, adds up to the number $\pslarg{Z}$.
    For instance $\pslpred{PlaceNumberSum}(1, 15, 25)$ is $1$ as $1 \cdot 10 + 15 = 25$.
\end{itemize}

The \textit{Tens Digit Sums} and \textit{Ones Digit Sums} rules compute the sum of two images in the same manner as the \textit{Digit Sums} rules in the \additionlongname{}1 model.
The sum of the digits is captured by the latent variables instantiated by the predicate $\pslpred{ImageDigitSum}$.

The \textit{Place Digit Sums} rules use the value of the $\pslpred{ImageDigitSum}$ variables to infer the sum of the images. 
More specifically, if the $\pslpred{ImageDigitSum}$ of the images in the tens place, $\pslarg{Img1}$ and $\pslarg{Img3})$, is $Z10$, and the $\pslpred{ImageDigitSum}$ of the images in the ones place, $\pslarg{Img2}$ and $\pslarg{Img4})$ is $Z1$, and if according to $\pslpred{PlaceNumberSum}$ the sum of the numbers $Z10$ and $Z1$ is $Z$, then the $\pslpred{Sum}$ of the images must be $Z$.
Notice that these rules are hard constraints as it is always possible and desirable to find values of the $\pslpred{ImageDigitSum}$ and $\pslpred{Sum}$ variables that satisfy these relations.

The \textit{Tens Digit Constraint} rules restrict the possible values of the $\pslpred{Sum}$ predicate based on the neural model's prediction for the digit in the tens place of a number. 
For instance, if the neural model predicts that the digit label for the image $\pslarg{Img1}$ is 1 and $\pslarg{Img1}$ is in the tens place of a number, then the sum that $\pslarg{Img1}$ is involved in cannot be any less than $10$ or greater than $118$.

The \textit{Ones Digit Constraint} rules restrict the possible values of the $\pslpred{Sum}$ predicate based on the neural model's prediction for the digit in the ones place of a number. 
For instance, if the neural model predicts that the digit label for the image $\pslarg{Img2}$ is $5$ and $\pslarg{Img2}$ is in the one place of a number, then the sum that $\pslarg{Img2}$ is involved in cannot be any less than $5$ or greater than $194$.

The \textit{Number Sum Constraint} rules limit the values that $\pslpred{ImageDigitSum}$ and $\pslpred{Sum}$ can take using constraints representing the possible sums in the tens and ones place.
For instance, if the $\pslpred{ImageDigitSum}$ of two images, $\pslarg{Img1}$ and $\pslarg{Img3}$, both in the tens place of two numbers being added, is $17$, then the $\pslpred{Sum}$ cannot be less than $170$ or greater than $188$. 
Furthermore, if the $\pslpred{ImageDigitSum}$ of two images, $\pslarg{Img2}$ and $\pslarg{Img4}$, both in the tens place of two numbers being added, is $17$, then the $\pslpred{Sum}$ cannot be less than $17$ or greater than $197$, and must have a $7$ in the ones place.

\begin{table}[t]
    \centering
    \small

    \begin{tabular}{lll}
        \toprule
            Hyperparameter                    & Tuning Range    & Final Value \\
         \midrule
            Neural Learning Rate              & \{1e-2, 1e-3, 1e-4\}       & 1e-3 \\
            ADMM Max Iterations               & \{50, 100, 500, 1000\}          & 100 \\
        \bottomrule
    \end{tabular}

    \caption{
        \shortname{} hyperparameters for the \additionlongname{}2 experiments.
    }
    \label{tab:mnistadd2-hyperparameters}
\end{table}

\begin{table*}[t]
    \centering

    \begin{tabular}{cclll}
        \toprule
            Number of Images & Number of Puzzles & Hyperparameter & Tuning Range & Final Value   \\
        \midrule
            \multirow{6}{*}{\textasciitilde 64} & \multirow{2}{*}{4} & Neural Learning Rate & \{1e-2, 1e-3, 1e-4\} & 0.001 \\
             &  & ADMM Max Iterations & \{50, 100, 1000\} & 100 \\
             & \multirow{2}{*}{10} & Neural Learning Rate & \{1e-2, 1e-3, 1e-4\} & 0.01 \\
             &  & ADMM Max Iterations & \{50, 100, 1000\} &  50 \\
             & \multirow{2}{*}{20} & Neural Learning Rate & \{1e-2, 1e-3, 1e-4\} & 0.001 \\
             &  & ADMM Max Iterations & \{50, 100, 1000\} & 1000 \\
        \midrule
            \multirow{6}{*}{\textasciitilde 160} & \multirow{2}{*}{10} & Neural Learning Rate & \{1e-2, 1e-3, 1e-4\} & 0.001 \\
             &  & ADMM Max Iterations & \{50, 100, 1000\} & 100 \\
             & \multirow{2}{*}{20} & Neural Learning Rate & \{1e-2, 1e-3, 1e-4\} & 0.001 \\
             &  & ADMM Max Iterations & \{50, 100, 1000\} & 100 \\
             & \multirow{2}{*}{40} & Neural Learning Rate & \{1e-2, 1e-3, 1e-4\} & 0.001 \\
             &  & ADMM Max Iterations & \{50, 100, 1000\} &  100 \\
        \midrule
            \multirow{6}{*}{\textasciitilde 320} & \multirow{2}{*}{20} & Neural Learning Rate & \{1e-2, 1e-3, 1e-4\} & 0.001 \\
             & & ADMM Max Iterations & \{50, 100, 1000\} & 50 \\
             & \multirow{2}{*}{40} & Neural Learning Rate & \{1e-2, 1e-3, 1e-4\} & 0.001 \\
             &  & ADMM Max Iterations & \{50, 100, 1000\} & 50 \\
             & \multirow{2}{*}{80} & Neural Learning Rate & \{1e-2, 1e-3, 1e-4\} & 0.0001 \\
             &  & ADMM Max Iterations & \{50, 100, 1000\} & 100 \\
        \midrule
            \multirow{2}{*}{\textasciitilde 1600} & \multirow{2}{*}{100} & Neural Learning Rate & \{1e-2, 1e-3, 1e-4\} & 0.0001 \\
             &  & ADMM Max Iterations & \{50, 100, 1000\} & 100 \\
        \midrule
            \multirow{2}{*}{\textasciitilde 3200} & \multirow{2}{*}{200} & Neural Learning Rate & \{1e-2, 1e-3, 1e-4\} & 0.01 \\
             &  & ADMM Max Iterations & \{50, 100, 1000\} & 100 \\
        \bottomrule
    \end{tabular}

    \caption{
        \shortname{} hyperparameters for the \sudokulongname{} experiments.
    }

    \label{tab:visual-sudoku-hyperparameters}
\end{table*}

\textbf{Hyperparameters}  \tabref{tab:mnistadd2-hyperparameters} presents the hyperparameter values and tuning ranges for the \shortname{} \additionlongname{}2 models.
The hyperparameter search was conducted on a single split generated from a list of 600 MNIST images, with the best parameters applied to all data settings.
Any unspecified values were left at their default settings.
The \textit{ADMM Max Iterations} parameter refers to the number of ADMM iterations conducted between each step of gradient descent during the learning process.
The \textit{Neural Learning Rate} parameter refers to the learning rate of the neural model used to predict image labels.

\begin{figure*}[t] 
    \centering
    \noindent\fbox{%
        \begin{minipage}{0.99\hsize}
            \begin{scriptsize}
            \begin{flalign*}
                \hspace{0.2cm} & \textit{\# L2 Loss} && \\
                & w_{1}: \pslpred{Neural}(\pslarg{Paper}, \pslarg{Label}) = \pslpred{Category}(\pslarg{Paper}, \pslarg{Label}) \\[0.3cm]
                & \textit{\# Label Propagation} \\
                & w_{2}: \pslpred{Link}(\pslarg{Paper1}, \pslarg{Paper2}) \psland \pslpred{Category}(\pslarg{Paper1}, \pslarg{Label}) \pslthen \pslpred{Category}(\pslarg{Paper2}, \pslarg{Label}) \\[0.3cm]
                & \textit{\# Simplex Constraints} \\
                & \pslpred{Category}(\pslarg{Paper}, +\pslarg{Label}) = 1 .
            \end{flalign*}
            \end{scriptsize}
        \end{minipage}
    }
    \caption{\shortname{} Citation Network Symbolic Model}
    \label{fig:neupsl-citation-network-symbolic-model}
\end{figure*}

\begin{table*}[ht!]
    \centering
    \footnotesize
    \begin{tabular}{ccclll}
        \toprule
            Dataset & Model & Neural/Symbolic & Hyperparameter & Tuning Range & Final Value   \\
        \midrule
            \multirow{16}{*}{Citeseer} & \multirow{8}{*}{\citationlp{}} & \multirow{3}{*}{Neural} & Hidden Layer Size & \{None, 32, 64, 128\} & None \\
             & & & Learning Rate & \{2.0e-0, 1.5e-0, 1.0e-0, 1.0e-1\} & 1.0e-0 \\
             & & & Weight Regularization & \{5.0e-5, 1.0e-5, 5.0e-6, 1.0e-6, 5.0e-7\} & 1.0e-6 \\
             \cmidrule{3-6}
             & & \multirow{5}{*}{Symbolic} & ADMM Step Size & \{0.1, 1.0\} & 1.0 \\
             & & & ADMM Max Iterations & \{25, 100, 1000\} & 25 \\
             & & & Alpha & \{0.0, 0.1\} & 0.0 \\
             & & & Gradient Steps & \{5, 50, 100\} & 50 \\
             & & & Gradient Step Size & \{1.0e-2, 1.0e-3, 1.0e-8\} & 1.0e-8 \\
            \cmidrule{2-6}
             & \multirow{8}{*}{\citationfs{}} & \multirow{3}{*}{Neural} & Hidden Layer Size & \{None, 32, 64, 128\} & None \\
             & & & Learning Rate & \{2.0e-0, 1.5e-0, 1.0e-0, 1.0e-1\} & 1.5e-0 \\
             & & & Weight Regularization & \{5.0e-5, 1.0e-5, 5.0e-6, 1.0e-6, 5.0e-7\} & 1.0e-6 \\
             \cmidrule{3-6}
             & & \multirow{5}{*}{Symbolic} & ADMM Step Size & \{0.01, 0.1, 1.0\} & 1.0 \\
             & & & ADMM Max Iterations & \{25, 100, 1000\} & 1000 \\
             & & & Alpha & \{0.0, 0.1\} & 0.0 \\
             & & & Gradient Steps & \{5, 50, 100\} & 100 \\
             & & & Gradient Step Size & \{1.0e-2, 1.0e-3, 1.0e-8\} & 1.0e-2 \\
        \midrule
            \multirow{16}{*}{Cora} & \multirow{8}{*}{\citationlp{}} & \multirow{3}{*}{Neural} & Hidden Layer Size & \{None, 32, 64, 128\} & None \\
             & & & Learning Rate & \{2.0e-0, 1.5e-0, 1.0e-0, 1.0e-1\} & 1.5e-0 \\
             & & & Weight Regularization & \{5.0e-5, 1.0e-5, 5.0e-6, 1.0e-6, 5.0e-7\} & 5.0e-5 \\
             \cmidrule{3-6}
             & & \multirow{5}{*}{Symbolic} & ADMM Step Size & \{0.01, 0.1, 1.0\} & 1.0 \\
             & & & ADMM Max Iterations & \{25, 100, 1000\} & 25 \\
             & & & Alpha & \{0.0, 0.1\} & 0.0 \\
             & & & Gradient Steps & \{5, 50, 100\} & 50 \\
             & & & Gradient Step Size & \{1.0e-2, 1.0e-3, 1.0e-8\} & 1.0e-8 \\
            \cmidrule{2-6}
             & \multirow{8}{*}{\citationfs{}} & \multirow{3}{*}{Neural} & Hidden Layer Size & \{None, 32, 64, 128\} & None \\
             & & & Learning Rate & \{2.0e-0, 1.5e-0, 1.0e-0, 1.0e-1\} & 1.5e-0 \\
             & & & Weight Regularization & \{5.0e-5, 1.0e-5, 5.0e-6, 1.0e-6, 5.0e-7\} & 5.0e-7 \\
             \cmidrule{3-6}
             & & \multirow{5}{*}{Symbolic} & ADMM Step Size & \{0.01, 0.1, 1.0\} & 1.0 \\
             & & & ADMM Max Iterations & \{25, 100, 1000\} & 1000 \\
             & & & Alpha & \{0.0, 0.1\} & 0.0 \\
             & & & Gradient Steps & \{5, 50, 100\} & 100 \\
             & & & Gradient Step Size & \{1.0e-2, 1.0e-3, 1.0e-8\} & 1.0e-3 \\
        \bottomrule
    \end{tabular}

    \caption{
        \shortname{} hyperparameters for the citation network node classification experiments.
    }

    \label{tab:citation-network-hyperparameters}
\end{table*}

\subsection{Visual-Sudoku-Classification}
The \shortname{} model for the \sudokulongname{} experiment integrates the neural model summarized in \tabref{tab:mnist-nn-arch} with the symbolic model depicted in \figref{fig:visual-sudoku-symbolic-model}.
The symbolic model contains the following predicates:

\begin{itemize}
    \item \textbf{$ \pslpred{Neural}(\pslarg{Puzzle}, \pslarg{X}, \pslarg{Y}, \pslarg{Number})$}
    The $ \pslpred{Neural} $ predicate contains the output class probability for each digit image inferred by the neural network.
    $ \pslarg{Puzzle} $ is sudoku puzzle's identifier, $\pslarg{X}$ and $\pslarg{Y}$ represent the location of image in the puzzle, and $ \pslarg{Number} $ is a digit that image may represent.

    \item \textbf{$ \pslpred{Digit}(\pslarg{Puzzle}, \pslarg{X}, \pslarg{Y}, \pslarg{Number})$}
    The $ \pslpred{Digit} $ predicate has the same arguments as the $ \pslpred{Neural} $ predicate, representing PSL's digit prediction on the image.

    \item \textbf{$ \pslpred{FirstPuzzle, \pslarg{X}, \pslarg{Y}}(\pslarg{Puzzle})$}
    The $ \pslpred{FirstPuzzle} $ predicate pins the values for the first row of the first puzzle to an arbitrary assignment. This is used to force the neural model to learn the correct label representation for easier evaluation.
\end{itemize}

The \textit{Row Constraint}, \textit{Column Constraint}, and \textit{Block Constraint} rules encode the standard Sudoku rules into constraints.
These constraints restrict multiple instances of a digit from appearing in a row, column, or block, respectively.

The \textit{Pin First Column} rules are used to assign arbitrary classes to the first row of a Sudoku puzzle.
The first row of the first correct puzzle from the training set is used to determine this arbitrary label assignment.
By assigning the first row to arbitrary classes, the neural model is provided a starting point for differentiating between the different classes and makes the final evaluation easier.

\textbf{Hyperparameters}  \tabref{tab:visual-sudoku-hyperparameters} presents the hyperparameter values and tuning ranges for the \shortname{} \sudokulongname{} models.
A hyperparameter search was conducted for each data setting on the initial split, with the optimal hyperparameters applied to all subsequent splits.
Any unspecified values were left at their default settings.
The \textit{ADMM Max Iterations} parameter refers to the number of ADMM iterations conducted between each step of gradient descent during the learning process.
The \textit{Neural Learning Rate} parameter refers to the learning rate of the neural model used to predict image labels.

\subsection{Citation Network Node Classification}

The \shortname{} model for the Citation Network Node Classification experiments integrates a single-layered neural model with the symbolic model depicted in \figref{fig:neupsl-citation-network-symbolic-model}.
The single-layer neural model connects the input to a dense-layered output containing a soft-max activation, kernel regularizer, and bias regularizer.
The symbolic model contains the following predicates:

\begin{itemize}
    \item \textbf{$\pslpred{Neural}(\pslarg{Paper}, \pslarg{Label})$}
    The $ \pslpred{Neural} $ predicate contains the output class probability for each paper as inferred by the neural network.
    $ \pslarg{Paper} $ is the identifier and $\pslarg{Label}$ is the category it can take.

    \item \textbf{$\pslpred{Category}(\pslarg{Paper}, \pslarg{Label})$}
    The $ \pslpred{Category} $ predicate has the same arguments as the $ \pslpred{Neural} $ predicate and represents PSL's label prediction on the paper.

    \item \textbf{$\pslpred{Link}(\pslarg{Paper1}, \pslarg{Paper2})$}
    The $ \pslpred{Link} $ predicate denotes whether two papers share a citation link.
\end{itemize}

The \textit{Label Propagation} rule propagates node labels to neighbors.
In this sense, it encodes the idea that papers sharing a citation link are likely to have the same underlying label category.

\textbf{Hyperparameters} \tabref{tab:citation-network-hyperparameters} presents the hyperparameter values and tuning ranges for the \shortname{} citation network node classification models.
A hyperparameter search was conducted for each data setting on the initial split, with the optimal hyperparameters applied to all subsequent splits.
The search process was divided into two distinct stages: a neural hyperparameter search and a symbolic hyperparameter search.
The optimal hyperparameters identified during the neural search were subsequently set during the symbolic search.
All neural models were trained for $250$ epochs utilizing early stopping on the validation set with a patience of $25$.
Final hyperparameter values for LP$_{PSL}$ and Neural$_{PSL}$ are the same as \citationlp{}.
Any unspecified values were left at their default settings.
The \textit{Hidden Layer Size} parameter refers to the size of a single hidden layer, where "None" removes that hidden layer, resulting in a model with only input and output layers.
The \textit{Learning Rate} parameter refers to the learning rate of the neural model.
The \textit{Weight Regularization} parameter adds a kernel and bias regularizer to the hidden layer and output.
The \textit{ADMM Step Size} parameter refers to the initial step size of the ADMM reasoner.
The \textit{ADMM Max Iterations} parameter refers to the number of ADMM iterations conducted between each step of gradient descent during learning.
The \textit{Alpha} is a value that weights the importance of the structural gradient passed back from the symbolic potentials and the gradient with respect to the labels.
The \textit{Gradient Steps} parameter refers to the number of gradient steps taken for joint learning.
The \textit{Gradient Step Size} parameter refers to the step size used in learning the symbolic parameters.
        \begin{table}[t]
    \centering
    \scriptsize

    \begin{tabular}{cccc}
        \toprule
            Order & Layer       & Parameter       & Value          \\
        \toprule
            \multirow{4}{*}{1} & \multirow{4}{*}{Convolutional} & Input Shape     & $28 \times 28$ \\
              &                 & Kernel Size     & 5              \\
              &                 & Output Channels & 6              \\
              &                 & Activation      & ELU            \\
        \cmidrule{2-4}
            \multirow{2}{*}{2} & \multirow{2}{*}{Max Pooling} & Pooling Width   & 2              \\
              &                 & Pooling Height  & 2              \\
        \cmidrule{2-4}
            \multirow{3}{*}{3} & \multirow{3}{*}{Convolutional} & Kernel Size     & 5              \\
              &                 & Output Channels & 16             \\
              &                 & Activation      & ELU            \\
        \cmidrule{2-4}
            \multirow{2}{*}{4} & \multirow{2}{*}{Max Pooling} & Pooling Width   & 2              \\
              &                 & Pooling Height  & 2              \\
        \cmidrule{2-4}
            \multirow{3}{*}{5} & \multirow{3}{*}{Fully Connected} & Input Shape     & 256            \\
              &                 & Output Shape    & 100            \\
              &                 & Activation      & ELU            \\
        \cmidrule{2-4}
            \multirow{3}{*}{6} & \multirow{3}{*}{Concatenation}   & Input Shape     & $2 \times 100$ \\
              &                 & Output Shape    & 200            \\
              &                 & Activation      & ELU            \\
        \cmidrule{2-4}
            \multirow{3}{*}{7} & \multirow{3}{*}{Fully Connected} & Input Shape     & 200            \\
              &                 & Output Shape    & 84             \\
              &                 & Activation      & ELU            \\
        \cmidrule{2-4}
            \multirow{3}{*}{8} & \multirow{3}{*}{Fully Connected} & Input Shape     & 84             \\
              &                 & Output Shape    & 19             \\
              &                 & Activation      & Softmax        \\
        \bottomrule
    \end{tabular}

    \caption{
        Neural architecture for the \additionlongname{}2 CNN baseline \citep{badreddine:ai22}.
    }
    \label{tab:mnist-addition1-cnn-arch}
\end{table}

\begin{table}[t]
    \centering
    \scriptsize
    \begin{tabular}{cccc}
        \toprule
            Order & Layer       & Parameter       & Value          \\
        \toprule
            \multirow{4}{*}{1} & \multirow{4}{*}{Convolutional}   & Input Shape     & $28 \times 28$ \\
              &                 & Kernel Size     & 5              \\
              &                 & Output Channels & 6              \\
              &                 & Activation      & ELU            \\
        \cmidrule{2-4}
            \multirow{2}{*}{2} & \multirow{2}{*}{Max Pooling}     & Pooling Width   & 2              \\
              &                 & Pooling Height  & 2              \\
        \cmidrule{2-4}
            \multirow{3}{*}{3} & \multirow{3}{*}{Convolutional}   & Kernel Size     & 5              \\
              &                 & Output Channels & 16             \\
              &                 & Activation      & ELU            \\
        \cmidrule{2-4}
            \multirow{2}{*}{4} & \multirow{2}{*}{Max Pooling}     & Pooling Width   & 2              \\
              &                 & Pooling Height  & 2              \\
        \cmidrule{2-4}
            \multirow{3}{*}{5} & \multirow{3}{*}{Fully Connected} & Input Shape     & 256            \\
              &                 & Output Shape    & 100            \\
              &                 & Activation      & ELU            \\
        \cmidrule{2-4}
            \multirow{3}{*}{6} & \multirow{3}{*}{Concatenation}   & Input Shape     & $4 \times 100$ \\
              &                 & Output Shape    & 400            \\
              &                 & Activation      & ELU            \\
        \cmidrule{2-4}
            \multirow{3}{*}{7} & \multirow{3}{*}{Fully Connected} & Input Shape     & 400            \\
              &                 & Output Shape    & 128             \\
              &                 & Activation      & ELU            \\
        \cmidrule{2-4}
            \multirow{3}{*}{8} & \multirow{3}{*}{Fully Connected} & Input Shape     & 128             \\
              &                 & Output Shape    & 199             \\
              &                 & Activation      & Softmax        \\
        \bottomrule
    \end{tabular}

    \caption{
        Neural architecture for the \additionlongname{}2 CNN baseline \citep{badreddine:ai22}.
    }
    \label{tab:mnist-addition2-cnn-arch}
\end{table}

\begin{table}[!t]
    \centering
    \resizebox{0.47\textwidth}{!}{
    \begin{tabular}{ccccc}
        \toprule
            \multirow{2}{*}{Model} & Number of & \multirow{2}{*}{Hyperparameter} & \multirow{2}{*}{Tuning Range} & \multirow{2}{*}{Final}  \\
             & Additions & & & \\
        \midrule
            \multirow{6}{*}{\additionlongname{}1} & \multirow{2}{*}{300} & Learning Rate & \{1e-3, 1e-4, 1e-5\} & 1e-3 \\
            & & Batch Size & \{16, 32, 64, 128\} & 32 \\
            & \multirow{2}{*}{3,000} & Learning Rate & \{1e-3, 1e-4, 1e-5\} & 1e-3 \\
            & & Batch Size & \{16, 32, 64, 128\} & 16 \\
            & \multirow{2}{*}{25,000} & Learning Rate & \{1e-3, 1e-4, 1e-5\} & 1e-3 \\
            & & Batch Size & \{16, 32, 64, 128\} & 32 \\
        \midrule
            \multirow{6}{*}{\additionlongname{}2} & \multirow{2}{*}{150} & Learning Rate & \{1e-3, 1e-4, 1e-5\} & 1e-3 \\
            & & Batch Size & \{16, 32, 64, 128\} & 32 \\
            & \multirow{2}{*}{1,500} & Learning Rate & \{1e-3, 1e-4, 1e-5\} & 1e-3 \\
            & & Batch Size & \{16, 32, 64, 128\} & 32 \\
            & \multirow{2}{*}{12,500} & Learning Rate & \{1e-3, 1e-4, 1e-5\} & 1e-3 \\
            & & Batch Size & \{16, 32, 64, 128\} & 64 \\
        \bottomrule
    \end{tabular}
    }
    \caption{
        CNN baseline hyperparameters for the \additionlongname{}1 and \additionlongname{}2 experiments.
    }
    \label{tab:cnn-addition-hyperparameters}
\end{table}

\section{Baseline Models}
\label{sec:appendix_baseline_neural_models}

This section provides additional details of the baseline models used in the \textit{Experimental Evaluation}.
The subsequent subsections will examine the architectural structure and hyperparameters employed for each setting.

\subsection{\additionlongname{}}

The CNN baseline neural models for the \additionlongname{}1 and \additionlongname{}2 experiments are summarized in \tabref{tab:mnist-addition1-cnn-arch} and \tabref{tab:mnist-addition2-cnn-arch} respectively.
These models take as input either two MNIST images (\additionlongname{}1) or four MNIST images (\additionlongname{}1) and output a probability distribution of the resulting sum.
Both models were trained to minimize cross-entropy loss.

\textbf{Hyperparameters} \tabref{tab:cnn-addition-hyperparameters} presents the hyperparameter values and tuning ranges for the baseline \additionlongname{}1 and \additionlongname{}2 models.
A hyperparameter search was conducted for three data sizes on the initial split, with the optimal results applied to all subsequent splits.
All experiments involving overlap utilized the best hyperparameters identified from the \additionlongname{}1 $300$ additions and \additionlongname{}2 $150$ additions searches. 
Any unspecified values were left at their default settings.
The \textit{Batch Size} parameter refers to the number of addition examples per batch of training and evaluation.
The \textit{Learning Rate} parameter refers to the learning rate of the model used to predict.

\begin{table}[!t]
    \centering
    \scriptsize
    \begin{tabular}{cccc}
        \toprule
            Order & Layer       & Parameter       & Value          \\
        \toprule
            \multirow{4}{*}{1} & \multirow{4}{*}{Convolutional}   & Input Shape     & $112 \times 112$ \\
              &                 & Kernel Size     & 3              \\
              &                 & Output Channels & 16             \\
              &                 & Activation      & ReLU            \\
        \cmidrule{2-4}
            \multirow{2}{*}{2} & \multirow{2}{*}{Max Pooling}     & Pooling Width   & 2              \\
              &                 & Pooling Height  & 2              \\
        \cmidrule{2-4}
            \multirow{3}{*}{3} & \multirow{3}{*}{Convolutional}   & Kernel Size     & 3              \\
              &                 & Output Channels & 16             \\
              &                 & Activation      & ReLU            \\
        \cmidrule{2-4}
            \multirow{2}{*}{4} & \multirow{2}{*}{Max Pooling}     & Pooling Width   & 2              \\
              &                 & Pooling Height  & 2              \\
        \cmidrule{2-4}
            \multirow{3}{*}{5} & \multirow{3}{*}{Convolutional}   & Kernel Size     & 3              \\
              &                 & Output Channels & 16             \\
              &                 & Activation      & ReLU            \\
        \cmidrule{2-4}
            \multirow{2}{*}{6} & \multirow{2}{*}{Max Pooling}     & Pooling Width   & 2              \\
              &                 & Pooling Height  & 2              \\
        \cmidrule{2-4}
            \multirow{3}{*}{7} & \multirow{3}{*}{Fully Connected} & Input Shape     & 2304            \\
              &                 & Output Shape    & 256             \\
              &                 & Activation      & ReLU            \\
        \cmidrule{2-4}
            \multirow{3}{*}{8} & \multirow{3}{*}{Fully Connected} & Input Shape     & 256            \\
              &                 & Output Shape    & 256             \\
              &                 & Activation      & ReLU            \\
        \cmidrule{2-4}
            \multirow{3}{*}{9} & \multirow{3}{*}{Fully Connected} & Input Shape     & 256             \\
              &                 & Output Shape    & 128             \\
              &                 & Activation      & ReLU            \\
        \cmidrule{2-4}
            \multirow{3}{*}{10} & \multirow{3}{*}{Fully Connected} & Input Shape     & 128             \\
              &                 & Output Shape    & 1             \\
              &                 & Activation      & Softmax            \\
        \bottomrule
    \end{tabular}
    \caption{
        Neural architecture for the \sudokulongname{} \vsBaselineVisual{} baseline \citep{badreddine:ai22}.
    }
    \label{tab:visual-sudoku-visual-baseline-arch}
\end{table}

\begin{table}[ht!]
    \centering
    \scriptsize
    \begin{tabular}{cccc}
        \toprule
            Order & Layer       & Parameter       & Value          \\
        \toprule
            \multirow{3}{*}{1} & \multirow{3}{*}{Fully Connected} & Input Shape     & 16             \\
              &                 & Output Shape    & 512            \\
              &                 & Activation      & ReLU           \\
        \cmidrule{2-4}
            \multirow{3}{*}{2} & \multirow{3}{*}{Fully Connected} & Input Shape     & 512            \\
              &                 & Output Shape    & 512            \\
              &                 & Activation      & ReLU           \\
        \cmidrule{2-4}
            \multirow{3}{*}{3} & \multirow{3}{*}{Fully Connected} & Input Shape     & 512            \\
              &                 & Output Shape    & 256            \\
              &                 & Activation      & ReLU           \\
        \cmidrule{2-4}
            \multirow{3}{*}{4} & \multirow{3}{*}{Fully Connected} & Input Shape     & 256            \\
              &                 & Output Shape    & 1              \\
              &                 & Activation      & ReLU           \\
        \bottomrule
    \end{tabular}
    
    \caption{
        Neural architecture for the \sudokulongname{} \vsBaselineDigit{} baseline.
    }
    \label{tab:visual-sudoku-digit-baseline-arch}
\end{table}

\subsection{\sudokulongname{}}

The \vsBaselineVisual{} and \vsBaselineDigit{} baseline neural models for the \sudokulongname{} experiments are summarized in \tabref{tab:visual-sudoku-visual-baseline-arch} and \tabref{tab:visual-sudoku-digit-baseline-arch} respectively.
The input to the \vsBaselineVisual{} baseline takes 16 MNIST images as input and produces a probability distribution indicating the likelihood that the images form a correct puzzle.
The input to the \vsBaselineDigit{} baseline takes 16 MNIST image ground truth labels as input and produces a probability distribution indicating the likelihood that the images' labels form a correct puzzle.
Both models were trained to minimize cross-entropy loss.

\begin{table}[t]
    \centering
    \scriptsize

    \begin{tabular}{ccccc}
        \toprule
             \multirow{2}{*}{Model} & Number of & \multirow{2}{*}{Hyperparameter} & \multirow{2}{*}{Tuning Range} & \multirow{2}{*}{Final}  \\
             & Puzzles & & & \\
        \midrule
            \multirow{4}{*}{\vsBaselineVisual{}} & 10 & Learning Rate & \{1e-3, 1e-4, 1e-5\} & 1e-4 \\
            & 20 & Learning Rate & \{1e-3, 1e-4, 1e-5\} & 1e-3 \\
            & 100 & Learning Rate & \{1e-3, 1e-4, 1e-5\} & 1e-2 \\
            & 200 & Learning Rate & \{1e-3, 1e-4, 1e-5\} & 1e-2 \\
        \midrule
            \multirow{4}{*}{\vsBaselineDigit{}} & 10 & Learning Rate & \{1e-3, 1e-4, 1e-5\} & 1e-3 \\
            & 20 & Learning Rate & \{1e-3, 1e-4, 1e-5\} & 1e-2 \\
            & 100 & Learning Rate & \{1e-3, 1e-4, 1e-5\} & 1e-2 \\
            & 200 & Learning Rate & \{1e-3, 1e-4, 1e-5\} & 1e-2 \\
        \bottomrule
    \end{tabular}
    \caption{
        \vsBaselineVisual{} and \vsBaselineDigit{} hyperparameters for the \sudokulongname{} experiment.
    }
    \label{tab:visual-sudoku-baseline-hyperparameters}
\end{table}

\begin{table}[ht]
    \centering
    \scriptsize
    \begin{tabular}{cccc}
        \toprule
            Order & Layer       & Parameters & Value         \\
        \toprule
            \multirow{1}{*}{1} & \multirow{1}{*}{Graph Conv Layer} & Number of Parameters    & 237056             \\
        \cmidrule{2-4}
            \multirow{2}{*}{2} & \multirow{2}{*}{Graph conv Layer} & Number of Parameters     & 390            \\
              &                 & Activation      &  softmax           \\
        \bottomrule
    \end{tabular}
    
    \caption{
        Neural architecture for the citation network node classification GCN model.
    }
    \label{tab:gcn-citation-network-baseline-arch}
\end{table}

\begin{table}[ht!]
    \centering
    \small

    \begin{tabular}{lll}
        \toprule
            Hyperparameter          & Tuning Range & Final Value \\
         \midrule
            Hidden Units        & \{16, 32, 64\} & 64 \\
            Learning Rate       & \{1e-2, 1e-3\} & 1e-3 \\
            Weight Regularizer  & \{1.0e-3, 5.0e-4\} & 1.0e-3 \\
        \bottomrule
    \end{tabular}

    \caption{
        GCN hyperparameters for the citation network node classification experiments.
    }
    \label{tab:gcn-hyperparameters}
\end{table}

\textbf{Hyperparameters} \tabref{tab:visual-sudoku-baseline-hyperparameters} presents the hyperparameter values and tuning ranges for the \vsBaselineVisual{} and \vsBaselineDigit{} baseline neural models.
A hyperparameter search was conducted for each data setting on the initial split, with the optimal hyperparameters applied to all subsequent splits.
Any unspecified values were left at their default settings.
The \textit{Learning Rate} parameter refers to the learning rate of the neural model.

\subsection{Citation Network Node Classification}

As described in the \textit{Experimental Evaluation}, the LP$_{PSL}$ and Neural$_{PSL}$ baseline models represent the distinct symbolic and neural components used in \citationlp{}.
Therefore, the LP$_{PSL}$ model is depicted in \figref{fig:neupsl-citation-network-symbolic-model}, and the Neural$_{PSL}$ model is a single-layered neural model connecting the input to a dense-layered output containing a soft-max activation, kernel regularizer, and bias regularizer.
Hyperparameters were set to the best values found for the \citationlp{} neural hyperparameter search (\tabref{tab:citation-network-hyperparameters}).

The GCN model follows the same architecture proposed by \cite{kipf:iclr17} and is summarized in \tabref{tab:gcn-citation-network-baseline-arch}.
The GCN takes a collection of node identifiers as input and outputs each node's class label.

\textbf{Hyperparameters} \tabref{tab:gcn-hyperparameters} presents the hyperparameter values and tuning ranges for the GCN model.
Each GCN model was trained with 50 percent dropout, a batch size of 1024, and 1000 epochs (utilizing early stopping on the validation set with a patience of 250). 
A hyperparameter search was conducted for each data setting on the initial split, with the optimal hyperparameters applied to all subsequent splits.
Any unspecified values were left at their default settings.

        \begin{table*}[tbh]
    \centering
    \resizebox{\textwidth}{!}{
        \begin{tabular}{c|ccc|ccc}
            \toprule
                \multirow{3}{*}{Method} & \multicolumn{3}{c}{\additionlongname{}1} & \multicolumn{3}{c}{\additionlongname{}2} \\
                & \multicolumn{6}{c}{ 
                       Number of Additions} \\
                           & 300 & 3,000 & 25,000 & 150 & 1,500 & 12,500 \\
             \midrule
                CNN             & 17.16 ± 00.62 & 78.99 ± 01.14 & 96.30 ± 00.30 & 01.31 ± 00.23 & 01.69 ± 00.27 & 23.88 ± 04.32 \\
                \ltnshortname{} & 69.23 ± 15.68 & \textbf{93.90 ± 00.51} & 80.54 ± 23.33 & 02.02 ± 00.97 & 71.79 ± 27.76 & 77.54 ± 35.55 \\
                \dplshortname{} & \textbf{85.61 ± 01.28} & 92.59 ± 01.40 & -\footnotemark[2] & \textbf{71.37 ± 03.90} & \textbf{87.44 ± 02.15} & -\footnotemark[2] \\
                \shortname{}    & 82.58 ± 02.56 & \textbf{93.66 ± 00.32} & \textbf{97.34 ± 00.26} & 56.94 ± 06.33 & \textbf{87.05 ± 01.48} & \textbf{93.91 ± 00.37} \\
            \bottomrule
        \end{tabular}
    }

    \caption{Test set accuracy and standard deviation on \textbf{\additionlongname{}}.
    Results reported here are run and averaged over the same ten splits.}
    \label{tab:MNIST-Addition-n-results}
\end{table*}

\begin{figure*}[t]
    \centering
    \begin{subfigure}[b]{\textwidth}
        \centering
        \includegraphics[width=1.0\textwidth]{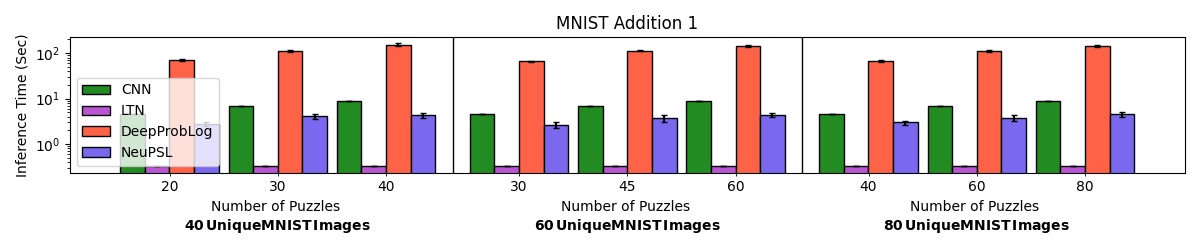}
    \end{subfigure}
    \begin{subfigure}[b]{\textwidth}
        \centering
        \includegraphics[width=1.0\textwidth]{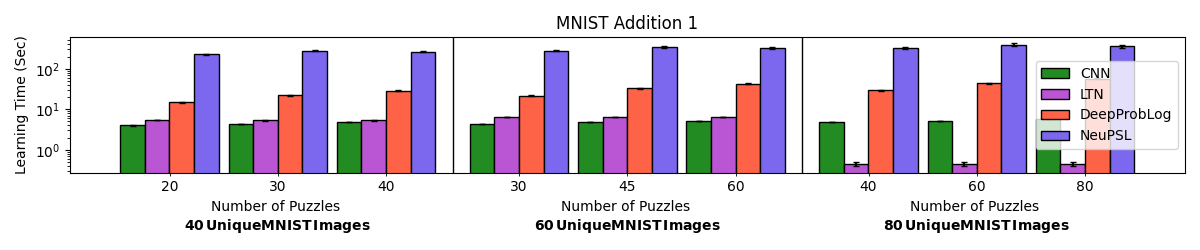}
    \end{subfigure}
    \begin{subfigure}[b]{\textwidth}
        \centering
        \includegraphics[width=1.0\textwidth]{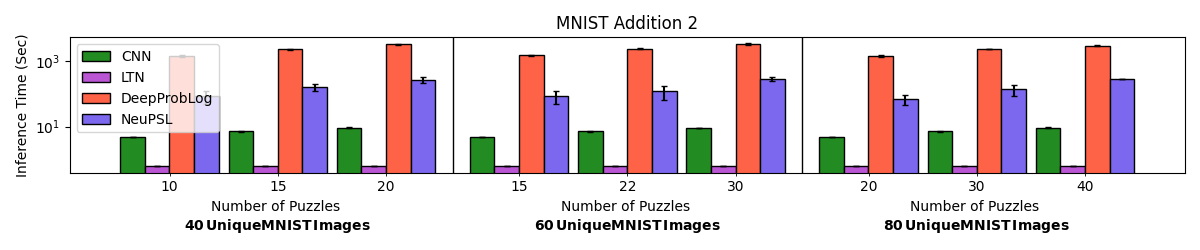}
    \end{subfigure}
    \begin{subfigure}[b]{\textwidth}
        \centering
        \includegraphics[width=1.0\textwidth]{images/mnist-addition-1-learning-runtime-results.png}
    \end{subfigure}
    \caption{
        Inference and learning time for \textbf{\additionlongname{}} experiments presented in \secref{sec:experiments-mnist-addition}.
    }
    \label{fig:mnist_addition_timing_results}
\end{figure*}

\section{Extended Evaluation Details}
\label{sec:extended_evaluation}

This section provides NeSy model details and expands the \textit{Experimental Evaluation} presented earlier on \additionlongname{} and provides inference and learning times for all experiments.

\subsection{NeSy Model Details}

The NeSy methods used in this work, along with their respective publications and implementation codes, are listed below:

\begin{description}
    \item[\dpllongname{} (\dplshortname{}):]
    All \dplshortname{} results use the \dplshortname{} models presented in \citep{manhaeve:ai21}, using default hyperparameters.
    Code was obtained from github.com/ML-KULeuven/deepproblog.

    \item[\dsllongname{} (\dslshortname{}):]
    All \dsllongname{} results use the \dsllongname{} models presented in \citep{winters:arxiv21}, using default hyperparameters.
    Code was obtained from github.com/ML-KULeuven/deepstochlog.

    \item[\ltnlongname{} (\ltnshortname{}):]
    All \ltnshortname{} results use the \ltnshortname{} models presented in \citep{badreddine:ai22}, using default hyperparameters.
    Code was obtained from github.com/logictensornetworks/logictensornetworks.

\end{description}

Licenses for \shortname, \dpllongname{}, \dsllongname{}, are under Apache License 2.0 and \ltnlongname{} are under MIT License.

\subsection{\additionlongname{} Extended Results}
\label{app:mnist_addition_extended_evaluation}

In this section, we conduct an extended analysis of the \additionlongname{} experiment by comparing the performance of \shortname{}, \dpllongname{} (\dplshortname{}) \citep{manhaeve:ai21}, \ltnlongname{} (\ltnshortname{}) \citep{badreddine:ai22} and neural baselines in non-overlap settings with commonly used split sizes in the research community \citep{manhaeve:ai21}.
Ten train splits are generated by randomly selecting, without replacement, $n\in\{600, 6000, 50000\}$ unique MNIST images from the original MNIST train split and converted to MNIST additions as described in the \textit{Datasets} appendix.
This process is then repeated to create validation and test splits, with the test splits being pulled from the original MNIST test split to prevent data leakage and $n=10000$.

\tabref{tab:MNIST-Addition-n-results} shows the average accuracy and standard deviation for \additionlongname{}1 and \additionlongname{}2.\footnote{
    In the largest data setting, there appeared to be an error with DPL, and the results produced were random.
    Rather than present these potentially misleadingly low results, we indicate with ‘-’.
}
The best average accuracy and results within a standard deviation of the best are in bold.
In all but two settings, \shortname{} is either the highest-performing model or within a standard deviation of the highest-performing model.
Moreover, \shortname{} has a markedly lower variance for nearly all training examples in both \additionlongname{} tasks.

\begin{table}[t]
    \centering

    \begin{tabular}{cc|cc}
        \toprule
            \multirow{2}{*}{Setting} & \multirow{2}{*}{Method} & Inference & Learning  \\
            & & (sec) & (sec)  \\
        \midrule
            \multirow{2}{*}{Citeseer} & \citationlp{} & 3.98 ± 0.05 & 29.90 ± 0.82 \\
            & \citationfs{} & 4.23 ± 0.05 & 32.94 ± 0.36 \\
        \midrule
            \multirow{2}{*}{Cora} & \citationlp{} & 4.00 ± 0.31 & 33.41 ± 1.23 \\
            & \citationfs{} & 4.07 ± 0.14 & 36.50 ± 0.53 \\
        \bottomrule
    \end{tabular}
    \caption{
        Inference and learning time for \shortname{} on Citation Network Node Classification experiments presented in \secref{sec:experiments-citation-network-node-classification}.
    }
    \label{tab:citation-inference-learning-runtime}
\end{table}

\begin{table}[t]
    \centering

    \begin{tabular}{cc|cc}
        \toprule
            Unique & \multirow{2}{*}{Puzzles} & Inference & Learning  \\
            Digits & & (sec) & (sec)  \\
        \midrule
            \multirow{3}{*}{64} & 4 & 4.65 ± 0.16 & 43.18 ± 1.35 \\
             & 8 & 6.47 ± 0.19 & 52.56 ± 1.08 \\
             & 16 & 12.56 ± 0.66 & 68.64 ± 0.89 \\
        \midrule
            \multirow{3}{*}{128} & 8 & 4.54 ± 0.07 & 52.45 ± 0.94 \\
             & 16 & 6.48 ± 0.18 & 68.91 ± 1.01 \\
             & 32 & 12.62 ± 0.52 & 102.60 ± 0.90 \\
        \midrule
            \multirow{3}{*}{256} & 8 & 4.67 ± 0.20 & 68.62 ± 1.04 \\
             & 16 & 6.53 ± 0.30 & 102.76 ± 2.05 \\
             & 32 & 12.59 ± 0.53 & 170.66 ± 5.82 \\
        \bottomrule
    \end{tabular}
    \caption{
        Inference and learning time for \shortname{} on Visual Sudoku Puzzle Classification experiments presented in \secref{sec:visual-sudoku}.
    }
    \label{tab:vspc-learning-runtime}
\end{table}

\subsection{Inference and Learning Runtime}
\label{appendix:runtime-results}

\tabref{tab:citation-inference-learning-runtime} summarizes the inference and learning time for \shortname{} on Citation Network Node Classification experiments presented in \secref{sec:experiments-citation-network-node-classification} and \tabref{tab:vspc-learning-runtime} summarizes the inference and learning time for \shortname{} on Visual Sudoku Puzzle Classification experiments presented in \secref{sec:visual-sudoku}.

\figref{fig:mnist_addition_timing_results} summarizes the inference and learning times associated with the \textbf{\additionlongname{}} experiments described in \secref{sec:experiments-mnist-addition}.
When evaluating the performance of the NeSy methods that perform complex symbolic inference (DPL and NeuPSL), a trade-off is observed. 
NeuPSL inference runs an order of magnitude faster than DPL but, surprisingly, takes longer to train on roughly the same number of gradient steps.
This timing difference derives from NeuPSL taking full gradient steps over the entire train dataset while DPL takes batched stochastic gradient steps.
Symbolic inference is a subprocess of NeSy-EBM learning, and DPL performs inference over a single addition, while NeuPSL performs inference over every addition.
An interesting direction for future work is to take batched gradient steps during NeuPSL learning, where the batches contain a set of overlapping additions.

Compared with the CNN and LTN models, DPL and NeuPSL run orders of magnitude slower.
CNN and LTN inference is equivalent to making a feed-forward pass through a neural network.
This will, therefore, be significantly faster than the complex symbolic inference done in \dplshortname{} and \shortname{}, but comes with a decrease in predictive performance.

        \section{Computational Hardware Details}
\label{sec:computational_hardware}

All timing experiments were performed on an Ubuntu 22.04.1 Linux machine with Intel Xeon Processor E5-2630 v4 at 3.10GHz.
    \end{appendix}

\end{document}